\def\VectorFont{\bf}
\newcommand{\vu}{{\VectorFont u}}
\newcommand{\vv}{{\VectorFont v}}
\newif\ifsubmit
\newcommand{\thui}[1]{}
\newcommand{\ruoyu}[1]{}
\newcommand{\zhiguo}[1]{}
\newcommand{\thui}[1]{{\color{blue}{[Tsung-Hi: #1]}}}
\newcommand{\ruoyu}[1]{{\color{red}{[Ruoyu: #1] }}}
\newcommand{\zhiguo}[1]{{\color{blue}{[Zhiguo: #1]}}}
\newtheorem{theorem}{Theorem}
\newtheorem{lemma}{Lemma}
\newtheorem{remark}{Remark}
\newtheorem{property}{Property}
\newtheorem{assumption}{Assumption}
\begin{document}
%
\title{Federated Semi-Supervised Learning with Class Distribution Mismatch}
%
%
%
%
\author{Zhiguo~Wang,
        Xintong~Wang,
        Ruoyu~Sun and
        Tsung-Hui~Chang
\thanks{ Zhiguo Wang is with College of Mathematics, Sichuan
University, Chengdu, Sichuan 610064, China (e-mail: wangzhiguo@scu.edu.cn). The work of Zhiguo Wang is supported by the Fundamental Research Funds for the Central Universities.}
\thanks{ Xintong Wang and Tsung-Hui Chang are with the Chinese University of Hong Kong, Shenzhen
and also with Shenzhen Research Institute of Big Data. (e-mail: 220019010@link.cuhk.edu.cn;
tsunghui.chang@ieee.org). Corresponding
author: Tsung-Hui Chang. The work of T.-H. Chang is supported in part by the Shenzhen Fundamental Research Fund under Grant JCYJ20190813171003723 and in part by the NSFC, China, under Grant 61731018 and 62071409.}
\thanks{Ruoyu Sun is with Department
of Industrial and Enterprise Systems
Engineering, University of Illinois at Urbana-Champaign (e-mail: ruoyus@illinois.edu).}
}

%
%

\markboth{IEEE TRANSACTIONS ON NEURAL NETWORKS AND LEARNING SYSTEMS,~Vol.~X, No.~X, October 2021}%
{Wang \MakeLowercase{\textit{et al.}}: Federated Semi-Supervised Learning with Class Distribution Mismatch}
%



\maketitle


\begin{abstract}
Many existing federated learning (FL) algorithms are designed for supervised learning tasks, assuming that the local data owned by the clients are well labeled. However, in many practical situations, it could be difficult and expensive to acquire complete data labels. Federated semi-supervised learning (Fed-SSL) is an attractive solution for fully utilizing both labeled and unlabeled data.  Similar to that encountered in federated supervised learning, class  distribution of labeled/unlabeled data could be non-i.i.d. among clients. Besides, in each client, the class distribution of labeled data may be distinct from that of unlabeled data. Unfortunately, both can severely jeopardize the FL performance. To address such challenging issues, we introduce two proper regularization terms that can effectively alleviate the class distribution mismatch problem in Fed-SSL. In addition, to overcome the non-i.i.d. data, we leverage the variance reduction and normalized averaging techniques to develop a novel Fed-SSL algorithm. Theoretically, we prove that the proposed method has a convergence rate of $\mathcal{O}(1/\sqrt{T})$, where $T$ is the number of communication rounds, even when the data distribution  are non-i.i.d. among clients. To the best of our knowledge, it is the first formal convergence result for Fed-SSL problems. Numerical experiments based on MNIST data and CIFAR-10 data show that the proposed method can greatly improve the classification accuracy compared to baselines.
\end{abstract}

\begin{IEEEkeywords}
Federated semi-supervised learning, heterogeneous, class distribution mismatch, variance reduction.
\end{IEEEkeywords}



%
\IEEEpeerreviewmaketitle

\vspace{-0.3cm}
\section{Introduction}
\vspace{-0.0cm}
Federated optimization \cite{McMahan2017,li2019federated,yang2019federated}, or FL, is proposed to enable collaborative machine learning (ML) without explicit sharing the client's raw data.
Compared with the traditional distributed learning setting \cite{chang2020distributed}, FL faces four challenges: non-i.i.d. data distribution, unbalanced data distribution, a massive number of clients and limited network connection \cite{konevcny2016federated}. To address this issue, various FL algorithms have been proposed \cite{yang2019federated}. Among them, the federated averaging (FedAvg) algorithm proposed in \cite{McMahan2017} has drawn significant attention due to its simplicity and communication efficiency. Specifically, FedAvg employs the local stochastic gradient descent (SGD) \cite{stich2018local} in parallel at the local devices, followed by model averaging at the server in each communication round. Recent efforts \cite{li2019convergence,khaled2019first} have established the convergence of FedAvg when the objective function is convex.
For a nonconvex FL optimization, in  \cite{karimireddy2020scaffold}, the authors obtained a tight convergence rate for FedAvg. However, FedAvg is known to
 suffer  slow convergence when the data  among clients are non-i.i.d.. Therefore, various  techniques such as adding a proximal term
(FedProx \cite{li2018federated}),  variance reduction
(VRL-SGD \cite{liang2019variance}, SCAFFOLD \cite{karimireddy2020scaffold}), and  normalized averaging (FedNova \cite{wang2020tackling}) are proposed.

Despite the popularity, most of the existing works on FL focused on the supervised learning tasks where the data owned by the clients are well labeled.
Nevertheless, data labeling can be expensive
and time consuming. 
This issue is more severe under the FL setting, since
clients may not have the resources to provide labels for their personal data, e.g., pictures in mobile phones or medical images in hospitals \cite{cohen2004}.
It raises a fundamental question of how to effectively make use of these massive and distributed unlabeled data for improving FL.
In ML, a standard solution to utilize unlabeled data is semi-supervised learning (SSL) methods \cite{cohen2004,zhu2005,chapelle2009}.
Therefore, it is reasonable to consider  SSL for FL.
\vspace{-0.3cm}
\subsection{Related Works}
We provide a brief review of existing SSL methods for data classification.
According to \cite{zhu2005,chapelle2009}, traditional centralized SSL methods include self-training, graph-based methods and semi-supervised support vector machines (SVMs), to name a few.
The success of these methods relies on some critical assumptions, in addition to the standard smoothness, cluster and manifold assumptions.
For example,  the self-training method originated from \cite{yarowsky1995unsupervised} by Yarowsky is a well-known bootstrapping algorithm, which relies on the assumption that  the data have a well-separated clustering structure \cite{zhu2005}. Some theoretical analyses support the effectiveness of self-training algorithms in \cite{abney2004understanding,haffari2007analysis}. Many recent approaches for semi-supervised learning advocate to train a neural network based on the consistency loss, which forces the model to generate consistent outputs when its inputs are perturbed, such as the pseudo-labeling \cite{lee2013pseudo}, the ladder network \cite{rasmus2015}, the $\Pi$ model \cite{laine2016}, the mean teacher \cite{tarvainen2017}, the Virtual adversarial training (VAT) \cite{miyato2018virtual} and the Mixmatch \cite{berthelot2019mixmatch}.

Currently, the existing SSL methods often assume that labeled data and unlabeled data come from the same class distribution, but in practice the unlabeled data are unlikely to be manually purified beforehand \cite{chen2020semi}. For example, in
medical diagnosis, unlabeled medical
images may contain some rare diseases  that never appeared in the labeled data set.  As
illustrated in Fig. \ref{fig:mismatch}, for an image classification task for Client A, labeled images contain two classes (bird and dog),  but unlabeled images include three novel classes (deer, car and horse) that are not present in the labeled images.  Hence, the unlabeled data may consist of both relevant and irrelevant data. However, using the irrelevant unlabeled data often leads to  performance degradation for SSL \cite{oliver2018realistic}. We call such problem as the \emph{class distribution mismatch} problem. Recently, there are some
attempts to overcome this problem for centralized SSL. For example, reference \cite{guo2020safe} applied a  safe deep
SSL method to alleviate the harm caused by the irrelevant unlabeled data, and the key idea is to select some relevant unlabeled data rather than using all unlabeled data directly. However, such method requires solving a complicated optimization problem for data selection.

A relevant direction is to consider distributed SSL, which has received considerable interests recently \cite{chang2017distributed,avrachenkov2016distributed,scardapane2016distributed,liu2018distributed,fierimonte2016fully,xie2019distributed} . In
\cite{scardapane2016distributed,liu2018distributed}, the authors considered semi-supervised SVMs under distributed setting, and used consensus-constrained distributed optimization methods \cite{chang2014multi,zeng2018nonconvex}. In \cite{fierimonte2016fully,xie2019distributed}, manifold regularized SSL methods are studied,
which however requires the clients to estimate the Euclidean distance matrix of data samples in advance.
This may cause significant communication overhead when the system has massively distributed clients and when the data at the clients have very different distributions.
With the development of federated learning, the authors of \cite{jin2020survey}
make a brief prospect into Fed-SSL.
The recent work \cite{albaseer2020exploiting} proposed a semi-supervised FL method called FedSem to exploit the unlabeled data in smart city applications. FedSem requires two steps: the first step is to train a
global model using only the labeled data and the second step is to inject unlabeled data into the learning process using the pseudo labeling technique. In \cite{Jeong2021federated}, the author propose  a new inter-client consistency loss that regularizes the models learned at local clients to output the same prediction. Reference \cite{yang2021federated} also uses a Fed-SSL technique based on the pseudo labeling technique to aid the diagnosis of COVID-19. However, these methods in \cite{albaseer2020exploiting,Jeong2021federated,yang2021federated} neither consider the non-i.i.d. data among clients nor consider the class distribution mismatch problem between labeled data and unlabeled data.
\begin{figure}[t]
\begin{center}
\includegraphics[width=1.03\linewidth]{./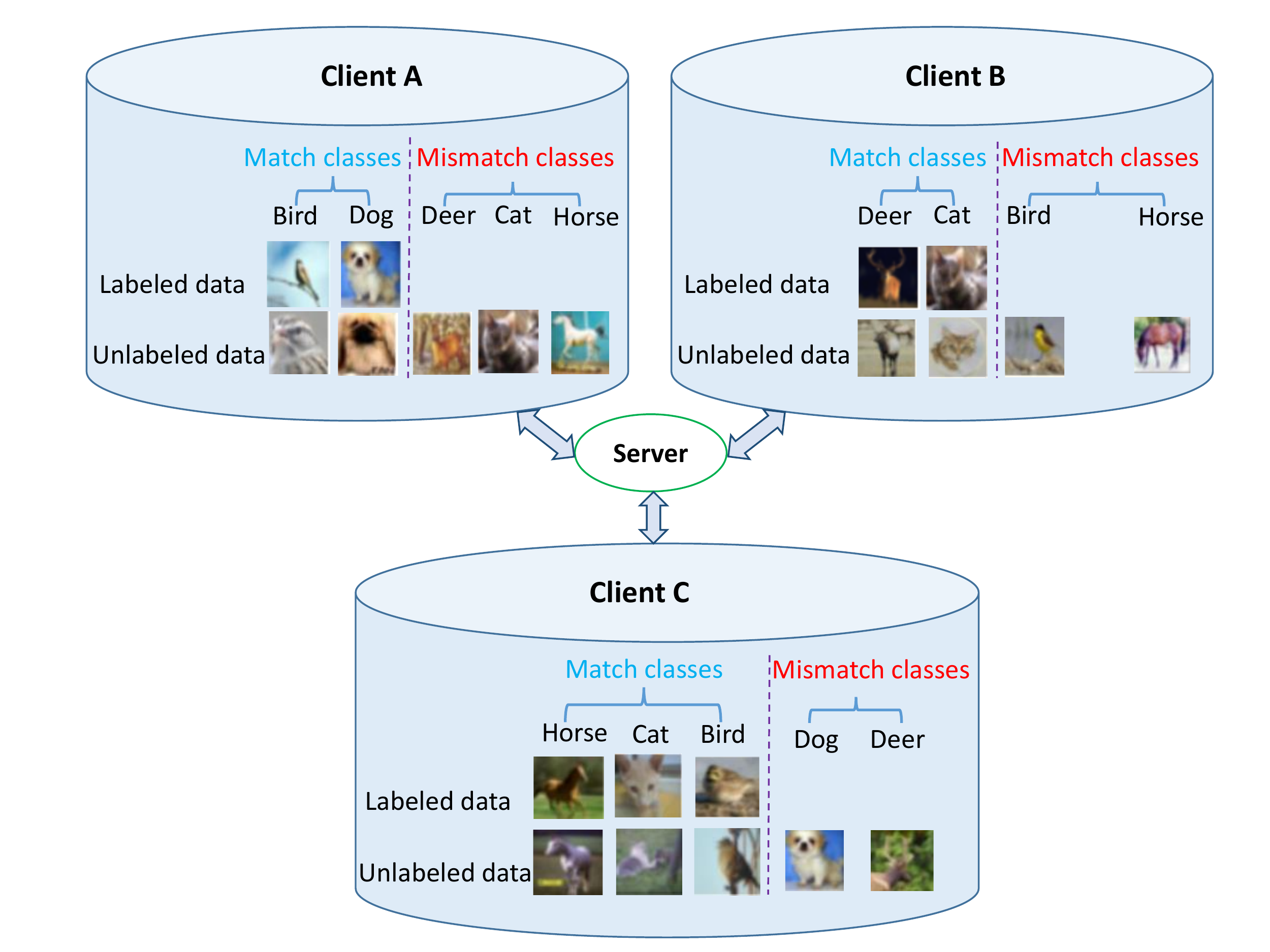}
\vspace{-0.3cm}
\caption{Fed-SSL under class distribution mismatch between labeled data unlabeled data. }
\label{fig:mismatch}
\end{center}
\vspace{-0.6cm}
\end{figure}

It is worthwhile to point out that there exist both challenges and opportunities for Fed-SSL under the two issues. The challenge lies in that non-i.i.d. data slow down the FL algorithm convergence, and thus the clients hardly make an accurate prediction for unlabeled data in early iterations.  It is conceivable that an early mistake of the pseudo labeling method can reinforce itself by generating incorrectly labeled data. Thus, re-training with these data will lead to an even worse model in successive iterations \cite{zhu2005}. On the other hand, the  opportunity lies in that the FL setting provides a means to leverage the data information from other clients (see Fig. \ref{fig:mismatch}) so that the class distribution mismatch problem can be alleviated and it helps the clients predict correct labels for the unlabeled data. This also brings a new challenge: \emph{How to transfer the knowledge of other clients to help predict labels
of data with classes that are not seen in the local labeled data subset}? These factors  make a naive combination of existing FL algorithms and centralized SSL techniques hardly
 to  deliver satisfactory performance.

\vspace{-0.3cm}
\subsection{Contributions}
In this paper, we develop a new approach to handle the Fed-SSL problem under non-i.i.d. data and class
distribution mismatch, aiming at  fully utilizing both the labeled and unlabeled data to achieve a high quality
FL performance in a communication-efficient way. Our contributions are summarized as follows.



\begin{itemize}
\item {\bf Problem formulation:} We formulate the
  Fed-SSL problem as a \emph{joint optimization problem} of  the model parameters and  the  pseudo labels of the unlabeled data. To eliminate performance degradation caused by the non-i.i.d. data and the class distribution mismatch, we introduce two regularization terms in the objective function (see \eqref{eqn: FW_joint_obj}). One is a penalty term for the pseudo labels, targeting at  boosting the prediction accuracy at early training stages. The second is a confidence penalty for the model parameters, which facilitates the knowledge transfer between the clients so the unlabeled data can be classified into some novel classes that are not seen in the labeled data.
  \item {\bf Algorithms design:} We propose a novel  \underline{fed}erated \underline{S}SL
algorithm not only with \underline{h}eterogeneous local SGD iterations but also \underline{v}ariance
\underline{r}eduction technique,  called Fed-SHVR. Since the Fed-SSL problem involves two blocks of variables, block coordinate
descent (BCD) method is known for its effectiveness in handling the loss function with multiple blocks.  To improve communication efficiency, similar to FedAvg, within each communication round of  Fed-SVHR, each client performs multiple epochs of local SGD with respect to model parameters and one step of pseudo label prediction. By recognizing the fact that heterogeneous local SGD iterations suffer from large variance and objective inconsistency when the data distribution among clients are  non-i.i.d. \cite{wang2020tackling}, inspired by \cite{karimireddy2020scaffold,wang2020tackling}, we introduce  gradient correction terms at clients to reduce the variance among clients, and normalized averaging of local gradients at the server to ensure objective consistency.

  \item {\bf Convergence analysis:}
  We prove that Fed-SHVR converges to a stationary point of the Fed-SSL problem in a sublinear rate of $\mathcal{O}(1/\sqrt{T})$, where $T$ is the number of communication rounds. The convergence analysis neither requires an assumption on the boundedness of gradient dissimilarity in \cite{wang2020tackling} nor the convexity of the objective function. To the best of our knowledge, such convergence results for Fed-SSL  have not been
presented in the literature.
  \item {\bf Experiment:} The performance of Fed-SHVR is evaluated by  experiments on the MNIST dataset and CIFAR-10 dataset.
  The experiment results show that  under non-i.i.d. data and class distribution mismatch among clients,
  Fed-SHVR can greatly improve the classification accuracy compared with baselines \cite{Jeong2021federated,zhang2020benchmarking}, including  naive combinations of FL  with centralized SSL methods.
\end{itemize}

 {\bf Synopsis:} Section \ref{sec: PF} presents the formulated Fed-SSL optimization problem. In Section \ref{sec: FedSAvg}, the proposed Fed-SVHR algorithm is presented. Section \ref{sec: CA} presents  the convergence conditions and convergence rate of  Fed-SVHR. The performance of the  Fed-SVHR algorithm is illustrated in
 Section \ref{sec: Simu}.
Finally, the conclusion is given in Section \ref{sec: conc}.

\section{Problem Formulation}\label{sec: PF}
In this section, we review the centralized SSL problem 
and present the considered novel Fed-SSL optimization problem.

\subsection{Centralized SSL via Pseudo-Labeling}
Consider a semi-supervised classification problem with a labeled dataset
$\mathcal{L}=\{(x_{i},y_{i}), i=1,\ldots,N\}$, and an unlabeled dataset $\mathcal{U}=\{u_{i}, i=1,\ldots,M\}$.
Here, $N$ and $M$ (often $N\ll M$) are the numbers of labeled and unlabeled data samples, respectively;
 $x_i$ is the $i$-th labeled sample with label $y_i\in\mathbb{R}^C$, which is a one-hot vector representing the true class label and $C$ is the number of classes.
For supervised learning, one can train a classification model by minimizing the following cross entropy loss function
\begin{align*}
L_{\rm CE}(\theta;\mathcal{L}) = -\frac{1}{N} \sum_{i=1}^{N}\Big\langle y_{i},\log\big(f_{\theta}(x_{i})\big)\Big\rangle,
\end{align*}
where $\theta$ is the classification model parameter, $f_{\theta}(x_{i})\in\mathbb{R}^C$ is the predicted probability vector (e.g., the softmax output) for each data $x_{i}$.

In order to utilize the unlabeled data $\mathcal{U}$,
we follow the pseudo-labeling method and
denote $\hat v_i\in\mathbb{R}^C$ as the pseudo label for each unlabeled sample $u_i$, for $i\in [M]\triangleq \{1,\ldots,M\}$.
In particular, we consider the following joint model training and pseudo label prediction problem \cite{Tanaka12018}
\begin{subequations}\label{problem}
\begin{eqnarray}
 \label{joint1}&&\min_{\theta,\hat{v}}
  					\ell(\theta,\hat{v})\triangleq L_{\rm CE}(\theta;\mathcal{L})
  					+\alpha_0 L_{\rm CE}(\theta;\mathcal{U},\hat{v})\\
 \label{joint2} && {\rm s.t.} ~e^\top\hat{v}_i=1,~\hat{v}_i\geq0,~ \forall~ i\in[M],
\end{eqnarray}
\end{subequations}
where $\hat{v}=\{\hat{v}_{1},\ldots,\hat{v}_{M}\}$ is the set of pseudo labels and $e=[1,\ldots,1]^\top\in \mathbb{R}^C$ is an all-one vector. As seen from \eqref{problem}, the pseudo labels $\hat v$ are treated as prediction variables and are jointly optimized with the model parameter $\theta$.
The simplex constraints in \eqref{joint2} imply that the obtained $\hat{v}$ are soft labels.
The objective function in \eqref{joint1} is composed of a supervised training loss of the labeled data and a training loss of the unlabeled data using pseudo labels. The weight $\alpha_0$ is used to balance the supervised loss and unsupervised loss \cite{lee2013pseudo}.

 To solve \eqref{problem}, one can use the popular block coordinate descent  (BCD) method by updating $\theta$ and $\hat{v}$ in an alternating fashion, as described below.
\begin{itemize}
  \item {\bf Updating $\hat{v}$ with fixed $\theta$:} It corresponds to the following problem
\begin{subequations}\label{problem2}
	\begin{eqnarray}
	\label{joint11}&&\min_{\hat{v}} L_{\rm CE}(\theta;\mathcal{U},\hat{v})\\
	\label{joint21} && {\rm s.t.} ~e^\top\hat{v}_{i}=1,~\hat{v}_{i}\geq0, ~ \forall~ i\in[M],
	\end{eqnarray}
\end{subequations}
From the definition of cross entropy, the objective function \eqref{joint11} is linear with respect to $\hat{v}$. Thus, the following closed solutions of  \eqref{problem2} can be readily obtained as
\begin{align}\label{eqn: closed_form}
[\hat{v}_{i}]_j=\left\{\begin{array}{ll}{1} & {\text { if } j=\arg\max\limits_{s\in \{1,2,\ldots,C\}} [f_{\theta}(u_{i})]_s,} \\ {0} & {\text { otherwise, }}\end{array}\right.
\end{align}
for $i\in [M]$, where $[\hat{v}_{i}]_j$ and $[f_{\theta}(u_{i})]_s$ denote the $j$-th  and the $s$-th entry of $\hat{v}_{i}$ and $f_{\theta}(u_{i})$, respectively.

%
  \item {\bf Updating $\theta$ with fixed $\hat{v}$:}  It corresponds to
      \begin{eqnarray}
       \label{joint12}\min_{\theta} \ell(\theta,\hat{v}), 
      \end{eqnarray}
      which can be handled by the standard SGD method \cite{berthelot2019mixmatch}.
\end{itemize}
Note that the alternating updates in  \eqref{eqn: closed_form} and \eqref{joint12}
are exactly the same as the pseudo-labeling method in \cite{lee2013pseudo}. Thus, the pseudo-labeling method \cite{lee2013pseudo} is in fact an application of the BCD method to the joint model training and pseudo label prediction problem \eqref{problem}.
Based on \eqref{problem}, we formulate a Fed-SSL problem  next.
\subsection{Fed-SSL Optimization Problem}
Consider an FL setting with a server and $K$ distributed clients.
We assume that both the labeled data $\mathcal{L}$ and unlabeled data $\mathcal{U}$ are distributed in the $K$ clients.
Specifically, for each client $k$, it owns local dataset $\mathcal{D}_k=\mathcal{L}_k\cup \mathcal{U}_k$, where
$
\mathcal{L}_k=\{(x_{k,i},y_{k,i}), i=1,\ldots,N_k\},
$ is the local training dataset and
$\mathcal{U}_k=\{ u_{k,i}, i=1,\ldots,M_k\}
$ is the local unlabeled dataset.
Here, $\mathcal{D}_k$, $k\in[K]\triangleq\{1,\ldots,K\}$ are non-overlapped, and
$N_k$ and $M_k$ are the numbers of labeled samples and unlabeled samples of client $k$, respectively.
Note that under the FL scenario, the data size $N_k$ and $M_k$ could be unbalanced among the clients. Moreover, as shown in Fig. \ref{fig:mismatch}, the labeled data and the large unlabeled data may  be drawn from different class distributions, e.g., the class distribution mismatch.

Let $\hat{\vv}=\{\hat{\vv}{_1},\ldots,\hat{\vv}{_K}\}$, where $\hat{\vv}{_k}=\{\hat{v}_{k,1},\ldots,\hat{v}_{k,M_k}\}$ is the set of pseudo labels for the $k$-th client with $\hat{v}_{k,i}\in\mathbb{R}^C$, and it belongs to the feasible set $\mathcal{V}_k=\{\hat{\vv}_k~|~e^\top\hat{v}_{k,i}=1,~\hat{v}_{k,i}\geq0, i\in [M_k]\}$. Then, we propose the following Fed-SSL optimization problem
\eqref{problem2} as
\begin{subequations}\label{dist prob}
	\begin{align}
	\min_{ \substack{\theta,\hat{\vv}}}~ &  F(\theta,\hat{\vv})\triangleq \sum_{k=1}^K\omega_k\Big[\underbrace{\ell_k(\theta,\hat{\vv}_k)
+\alpha_1r_1(\hat{\vv}_k)+\alpha_2r_2({\theta})}_{\triangleq F_k(\theta,\hat{\vv}_k)}\Big]\label{eqn: FW_joint_obj} \\
	{\rm s.t.}~ & \hat{\vv}_k\in\mathcal{V}_k,~ k\in [K],   \label{eqn: FW_joint_c}
	\end{align}
\end{subequations}
where
$\ell_k(\theta,\hat{\vv}_k)=L_{\rm CE}(\theta;\mathcal{L}_k)+\alpha_0 L_{\rm CE}(\theta; \mathcal{U}_k, \hat{\vv}_k)$ and $F_k(\theta,\hat{\vv}_k)$
is the local cost function of each client $k$,  $\omega_k\geq 0$ is the weight of the $k$-th client satisfying $\sum_{k=1}^K\omega_k=1$; for example,  $\omega_k = \frac{N_k+M_k}{N+M}$, $k\in [K]$ \cite{chang2017distributed}.

Notably, compared with \eqref{joint1}, we introduce in \eqref{eqn: FW_joint_obj} two additional regularization terms
$r_1(\hat{\vv}_k)$ and $r_2(\theta)$ for the pseudo label and model parameter, respectively, where $\alpha_1$ and $\alpha_2$ are the weight coefficients. The motivations of $r_1(\cdot)$ and $r_2(\cdot)$ are explained below.

\textbf{Regularization $r_1(\cdot)$}: When the data are non-i.i.d. among clients, FL algorithms perform unstably and have slow convergence \cite{karimireddy2020scaffold}. So the model mostly makes incorrect predictions
 at the early training stages. However, the pseudo labeling method takes hard pseudo-labels in \eqref{eqn: closed_form} as ``ground truth", which therefore causes overconfident mistakes. Meanwhile, the early mistakes of the pseudo labeling method could reinforce itself and result in an even worse model in successive iterations. Thus, we require a regularization of the pseudo labels to reduce the confidence level.

Toward this goal, let us  choose  $r_1(\hat{\vv}_k)$ as follows
\begin{align}\label{r_1}
r_1(\hat{\vv}_k)=\frac{1}{M_k}\sum_{i=1}^{M_k} {\rm KL}(\hat{v}_{k,i},\vu) ,
\end{align}
where $\vu=[\frac{1}{C},\ldots,\frac{1}{C}]\in\mathbb{R}^C$ is a uniform distribution, and
${\rm KL}(\cdot,\cdot)$ means the Kullback-Leibler divergence. Adding the regularization term $r_1(\hat{\vv}_k)$ in \eqref{r_1} makes the pseudo label prediction less decisive, as seen from the lemma below.
\begin{lemma}\label{lem: pseudo}
Let $r_1(\hat{\vv}_k)$ be defined in \eqref{r_1}. For fixed $\theta$ and $k\in[K]$, the closed form solution of optimization
\begin{subequations}\label{update_y}
	\begin{align}
	\min_{ \substack{\hat{\vv}_{k}}}~ &  \alpha_0L_{\rm CE}(\theta; u_{k}, \hat{\vv}_{k})+\alpha_1 r_1(\hat{\vv}_{k})\label{eqn:update_y1} \\
	{\rm s.t.}~ & \hat{\vv}_k\in\mathcal{V}_k,   \label{eqn:update_y2}
	\end{align}
\end{subequations}
is given by
\begin{align}\label{eqn: closed1}
[\hat{v}_{k,i}]_j = \frac{[f_{\theta}(u_{k,i})]_j^{\frac{\alpha_0}{\alpha_1}}}{\sum_{j=1}^C
[f_{\theta}(u_{k,i})]_j^{\frac{\alpha_0}{\alpha_1}}}, j=1,\ldots,C,
\end{align}
where $[\hat{v}_{k,i}]_j$ denotes the $j$th-entry of $\hat{v}_{k,i}$.
\end{lemma}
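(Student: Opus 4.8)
The plan is to exploit the fact that problem \eqref{update_y} is separable across the $M_k$ unlabeled samples and that, for each sample, the objective is a strictly convex function on the probability simplex, so the unique minimizer is pinned down by the first-order (KKT) conditions. First I would substitute the definitions of the cross-entropy loss and of the KL divergence into \eqref{eqn:update_y1}: writing $a_{ij}=[f_{\theta}(u_{k,i})]_j$ and $p=\hat{v}_{k,i}$, and dropping the common factor $1/M_k$ together with the additive constant $\log C$ (neither affects the minimizer), problem \eqref{update_y} decouples into $M_k$ independent problems, one for each $i\in[M_k]$:
\begin{align*}
\min_{p\in\mathbb{R}^C}~ & -\alpha_0\sum_{j=1}^C p_j\log a_{ij} + \alpha_1\sum_{j=1}^C p_j\log p_j \\
\text{s.t.}~ & \textstyle\sum_{j=1}^C p_j = 1,\quad p_j\ge 0,\ j=1,\dots,C.
\end{align*}
Because $f_{\theta}(\cdot)$ is a softmax (probability) output, $a_{ij}>0$ for every $j$, a fact I will use below.

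Next I would establish that the nonnegativity constraints are inactive at the optimum. The negative-entropy term $\sum_j p_j\log p_j$ is strictly convex on the simplex and the remaining term is affine, so the objective is strictly convex and attains its minimum at a unique point $p^\star$ on the compact feasible set. If $p^\star_j=0$ for some $j$, then moving along the feasible direction $e_j-e_l$ (with $p^\star_l>0$) produces a directional derivative containing $\frac{d}{d\epsilon}\big[\epsilon\log\epsilon\big]=\log\epsilon+1\to-\infty$ as $\epsilon\downarrow 0$, contradicting optimality; hence $p^\star$ is interior. It therefore suffices to enforce only the equality constraint, via a single Lagrange multiplier $\lambda$.

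Then I would solve the stationarity condition. Setting the gradient of the Lagrangian to zero gives, for each $j$,
\begin{align*}
-\alpha_0\log a_{ij} + \alpha_1\big(\log p_j + 1\big) - \lambda = 0,
\end{align*}
so $\log p_j = \tfrac{\alpha_0}{\alpha_1}\log a_{ij} + \tfrac{\lambda}{\alpha_1}-1$, i.e. $p_j = c\,a_{ij}^{\alpha_0/\alpha_1}$ with $c=\exp(\lambda/\alpha_1-1)>0$ the same for all $j$. Imposing $\sum_{j=1}^C p_j=1$ yields $c=\big(\sum_{j=1}^C a_{ij}^{\alpha_0/\alpha_1}\big)^{-1}$, which is precisely the closed form \eqref{eqn: closed1}. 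Since this point is feasible and satisfies the KKT conditions of a strictly convex program, it is the unique global minimizer, which finishes the argument; repeating the same computation for every $i\in[M_k]$ gives the stated solution for $\hat{\vv}_k$.

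The main obstacle — indeed essentially the only nonroutine point — is the boundary argument: one must verify that the constraints $\hat{v}_{k,i}\ge 0$ do not bind, and this relies on the strict positivity of the softmax outputs $[f_{\theta}(u_{k,i})]_j$ together with the blow-up of the derivative of $t\mapsto t\log t$ at the origin. Once that is in place, the remainder is a standard Lagrange-multiplier calculation.
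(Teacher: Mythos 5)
Your proof is correct and follows essentially the same route as the paper's: reduce to a per-sample convex problem on the simplex, write the Lagrangian for the equality constraint, solve the stationarity condition to get $p_j \propto [f_\theta(u_{k,i})]_j^{\alpha_0/\alpha_1}$, and normalize. The only difference is that you explicitly verify the nonnegativity constraints are inactive (via positivity of the softmax outputs and the derivative blow-up of $t\log t$ at $0$) and invoke strict convexity for uniqueness, points the paper's Appendix~A computation leaves implicit.
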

\begin{proof}
See Appendix \ref{proof_lemma}.
\end{proof}
Note that the closed solution \eqref{eqn: closed1} is exactly  the sharpening function  proposed in the popular Mixmatch method \cite{berthelot2019mixmatch}. When $\alpha_1\rightarrow 0$, the output of $\hat{v}_{k,i}$ in \eqref{eqn: closed1} approaches a one-hot distribution, which is degraded to the hard pseudo label in \eqref{eqn: closed_form}.  On the contrary, the soft pseudo-label in  \eqref{eqn: closed1} eliminates overconfident mistakes, which can achieve a better recovery accuracy  \cite{Tanaka12018}. As one will see in Section \ref{sec: Simu}, the regularizer $r_1(\hat{\vv}_k)$ can help to speed up the convergence of Fed-SSL at early training stages.

\textbf{Regularization $r_2(\cdot)$}: The unlabeled data may have some unseen class when the distributions of  labeled data and unlabeled data are  mismatched. Thus, we may transfer the required knowledge from other clients. To enable this, inspired by the work \cite{pereyra2017regularizing}, we introduce a regularizer for the model output $f_{\theta}(u_k)$  as follows
\begin{align}\label{eqn: r2}
r_2(\theta)={\rm KL}(f_{\theta}(u_k),\vu).
\end{align}
As we see,  ${\rm KL}(f_{\theta}(u_k),\vu)$ would make the prediction $f_{\theta}(u_k)$ away from a categorical distribution like the one-hot vector.
This will prevent the local model from making prediction solely based on its local knowledge of seen labels. Instead, through the FL process, the model parameters are able to infer some novel classes that have never appeared  in its local labeled data.

%

\section{Proposed Fed-SHVR Algorithm}\label{sec: FedSAvg}
In this section, we present the proposed Fed-SHVR algorithm for solving problem \eqref{dist prob} and it is presented in Algorithm \ref{alg: model_avg}.

Since the Fed-SSL problem   \eqref{dist prob} involves two blocks of variables $\theta$ and $\hat \vv$, many of the exiting FL algorithms cannot be applied directly \cite{wang2021clustering}. In view of that BCD steps like in \eqref{eqn: closed_form} and \eqref{joint12} can be used to handle Fed-SSL optimization \eqref{dist prob}, we propose to train a global model for \eqref{dist prob} in a novel way that combines BCD with the local SGD strategy \cite{McMahan2017} to reduce the communication cost. Since the non-i.i.d. data always happen for FL scenarios and at the same time averaging the local SGD can cause higher variance among clients, the variance reduction techniques \cite{liang2019variance} are desired. Specifically, for each communication round $t=1,2,\ldots$, our algorithm has two parts: one is the client update  and the other is the server update.

 \textbf{Client Update:} Based on the client's local data, the model parameter $\theta$ and pseudo label $\hat{\vv}_k$ are alternatively updated by the BCD framework. Firstly, the pseudo labels for optimization \eqref{dist prob} is obtained by solving  optimization \eqref{update_y}. Noticing that the soft label in \eqref{eqn: closed1}  is derived  when $r_1(\hat{\vv}_k)$ is selected as a proper regularizer in Lemma \ref{lem: pseudo}. Secondly, after updating $\hat{\vv}_k$, we select mini-batch  labeled data $\xi_k$ and unlabeled data $\zeta_k$
  uniformly at random from $\mathcal{L}_k$ and $\mathcal{U}_k$, respectively. Then the stochastic gradient $g_k(\theta_k, \hat \vv_k)$ of the local loss function is obtained
  \begin{align}
   g_k(\theta_k, \hat \vv_k):=
        \nabla_{\theta} F_k(\theta_k, \hat \vv_k ; \xi_k,\zeta_k).
  \end{align}
   If we directly take several local  SGDs to update model parameter $\theta$ in \eqref{update_theta}, it may  cause  client-drift when the data  are non-i.i.d. among clients.  To counter this drift, SCAFFOLD \cite{karimireddy2020scaffold} and VRL-SGD \cite{liang2019variance}  introduce a gradient correction term $d_k^t$ for local SGD. Specifically, the gradient correction term $d_k^t$ in \eqref{gradient_corr} is used in \eqref{update_theta}. As we show in Remark \ref{remark_2},
   \eqref{update_theta} together with \eqref{gradient_corr} is equivalent to taking gradient descent along an estimated global gradient direction of
   \eqref{eqn: FW_joint_obj}, and thus effectively reduces the variance among clients. Finally, the  client models after $\tau_k$ local SGD updates (Steps \eqref{eqn: fedSavg theta1} -\eqref{update_theta}) are sent to the server.

\textbf{ Server Update:} If the mini-batch size for labeled data and unlabeled data are $B_l$ and $B_u$, respectively,  and  each client $k$ performs $E$ epochs, the number of local SGD iterations is $\tau_k=\max\{m_kE/B_u,N_kE/B_l\}$,  Thus, the clients are \emph{heterogeneous} if they have different $\tau_k$
(otherwise they are \emph{homogeneous}). In \cite{wang2020tackling}, the authors have proved that the standard averaging of client models \cite{McMahan2017}
$$
\theta^t =  \sum_{k=1}^K\omega_k\theta_k^{t-1,\tau_k}
$$
after heterogeneous local updates $\tau_k$ will prevent the algorithm from converging to a stationary point, and  such phenomenon is called \emph{objective inconsistency}. To deal with heterogeneous local updates, in our algorithm, the server obtains the  model $\theta^t$ by weighted averaging between the normalized local gradients  and the previous model parameter $\theta^{t-1}$ as follows
$$\theta^t = \theta^{t-1} - \eta\bar{\tau} \left( \sum_{k=1}^K\omega_k\frac{\theta^{t-1}-\theta_k^{t-1,\tau_k}}{\eta \tau_k}\right),
$$
where $\bar{\tau}=\sum_{k=1}^K\omega_k\tau_k$. Then, the server broadcasts $\theta^t$ to the clients.



\begin{remark}\label{remark_2}{\rm (Variance reduction of Fed-SVHR)
Let us see how
the update of gradient correction term $d_k$ in \eqref{gradient_corr} help to reduce the client drift variance.
Using the fact $d_k^0=0$ and  summing up all $d_k^s$ from $0$ to $t$, we have
\begin{align*}
\nonumber d_k^t &= \sum_{s=1}^t\Big(\frac{\theta^{s-1}-\theta^s}{\eta \bar{\tau}}-\frac{1}{\eta \tau_k}(\theta^{s-1}-\theta_k^{s-1,\tau_k})\Big)\\
 &=\sum_{s=1}^t\Big(\sum_{l=1}^K\omega_l\frac{\theta^{s-1}-\theta_l^{s-1,\tau_l}}{\eta \tau_l}-\frac{1}{\eta \tau_k}(\theta^{s-1}-\theta_k^{s-1,\tau_k})\Big),
\end{align*}
where the second equality is due to \eqref{eqn: fedSavg theta1}. By  the above equality over $k=1,\ldots,K$, and using the fact $\sum_{k=1}^K\omega_k=1$, we obtain
\begin{align}\label{eqn: d_k2}
\sum_{k=1}^K\omega_kd_k^t =  0.
\end{align}
In addition, using \eqref{eqn: fedSavg theta1} again, we can rewrite $d_k^t$ as below
\begin{align}
\nonumber d_k^t &= d_k^{t-1}+\sum_{l=1}^K\omega_l\frac{\theta^{t-1}-\theta_l^{t-1,\tau_l}}{\eta \tau_l}-\frac{1}{\eta \tau_k}(\theta^{t-1}-\theta_k^{t-1,\tau_k})
\end{align}
Substituting  \eqref{update_theta} into the above equality gives rise to
\begin{align}\label{eqn: d_kt}
\nonumber d_k^t&=d_k^{t-1}+\sum_{l=1}^K\frac{\omega_l}{\tau_l}\sum_{q=1}^{\tau_l}
(g_l(\theta_l^{t-1,q-1}, \hat \vv_l^{t})+d_l^{t-1})\\
\nonumber&~~~~-\frac{1}{\tau_k}\sum_{q=1}^{\tau_k}
(g_k(\theta_k^{t-1,q-1}, \hat \vv_k^{t})+d_k^{t-1})\\
& = \sum_{l=1}^K\omega_l\sum_{q=1}^{\tau_l}\frac{1}{\tau_l}
g_l(\theta_l^{t-1,q-1}, \hat \vv_l^{t})-\sum_{q=1}^{\tau_k}\frac{1}{\tau_k}
g_k(\theta_k^{t-1,q-1}, \hat \vv_k^{t})
\end{align}
where  the final equality is due to \eqref{eqn: d_k2}. With \eqref{eqn: d_kt}  substituted into \eqref{update_theta}, we have
\begin{align}\label{theta_the}
\nonumber\theta_k^{t,q}&=\theta_k^{t,q-1}-\eta  \Big(g_k(\theta_k^{t,q-1}, \hat \vv_k^{t+1})-\sum_{q=1}^{\tau_k}\frac{1}{\tau_k}
g_k(\theta_k^{t-1,q-1}, \hat \vv_k^{t})\\
&\qquad\qquad+
\sum_{k=1}^K\omega_k\sum_{q=1}^{\tau_k}\frac{1}{\tau_k}
g_k(\theta_k^{t-1,q-1}, \hat \vv_k^{t})\Big).
\end{align}
Interestingly, the above \eqref{theta_the} resembles  SAGA \cite{defazio2014saga}, where the model is updated along an estimated global gradient direction considering the data of all clients. Thus,  Fed-SHVR can be seen as an extension of variance
reduction techniques of \cite{liang2019variance} and \cite{defazio2014saga}.}
\end{remark}
\begin{algorithm}[t!]
	\caption{\texttt{Proposed Fed-SHVR}} 
	\label{alg: model_avg}
	\begin{algorithmic}[1]
		\STATE {\bfseries Input:} initial model parameters $\theta^0=\theta_1^{0, \tau_1}=\cdots=\theta_K^{0, \tau_K}$ at the server side; initial pseudo labels of $\hat{\vv}^0_{1},\cdots,\hat{\vv}_{K}^{0}$ and gradient correction term $d_1^0=\cdots= d_K^0=0$ at the clients; initial the learning rate $\eta$.
        \STATE Calculate  the local iterations $\tau_k$ completed by client $k$ and  $\bar{\tau}=\sum_{k=1}^K\omega_k\tau_k$ at server.
        \FOR{communication round $t=1$ {\bfseries to} $T$}		
		\STATE {\bfseries \underline{Server side:}} Compute
		\begin{align}
	\label{eqn: fedSavg theta1} \theta^t &= \theta^{t-1} - \eta\bar{\tau} \left( \sum_{k=1}^K\omega_k\frac{\theta^{t-1}-\theta_k^{t-1,\tau_k}}{\eta \tau_k}\right)
		\end{align}
		and broadcast $\theta^t$ to all clients.		
		\STATE {\bfseries \underline{Client side:}}~~
        \FOR{client $k = 1$ {\bfseries to} $K$ (in parallel)}
        \STATE Obtain $\hat{\vv}_{k}^{t+1}$ from \eqref{update_y}-\eqref{eqn: closed1}
  		\STATE Update gradient correction term
        \begin{align}\label{gradient_corr}
        d_k^t = d_k^{t-1}+\frac{\theta^{t-1}-\theta^t}{\eta \bar{\tau}}-\frac{\theta^{t-1}-\theta_k^{t-1,\tau_k}}{\eta \tau_k}.
        \end{align}
        \STATE Set $\theta_k^{t,0} = \theta^t$.
        \FOR{ $q = 1$ {\bfseries to} $\tau_k$}
        \STATE Select data $\xi_k^{t,q}$ and $\zeta_k^{t,q}$ uniformly at random from $\mathcal{L}_k$ and $\mathcal{U}_k$, and update
        \begin{align}\label{update_theta}
        \theta_k^{t,q}=\theta_k^{t,q-1}-\eta  \big(g_k(\theta_k^{t,q-1}, \hat \vv_k^{t+1})+d_k^t\big),
		\end{align}	
		\ENDFOR
		\STATE
		Upload $\theta_k^{t,\tau_k}$ to the server.
		\ENDFOR
		\ENDFOR
	\end{algorithmic}
\end{algorithm}
\vspace{-0.2cm}
%
%
%


\section{Convergence Analysis}\label{sec: CA}
In this section, we build the convergence conditions of the proposed Fed-SHVR algorithm.
\subsection{Assumptions}
We first make some standard assumptions.
\begin{assumption}\label{ass: reg}
The regularization terms $r_1(\hat{\vv})$ and $r_2(\theta)$ are continuous differentiable functions. In addition, $r_1(\hat{\vv})$ is a $\mu$-strongly convex, where $\mu>0$, i.e., for any $\hat \vv_1,\hat \vv_2 \in \mathcal{V}_k$
\begin{align}\label{reg_strong}
r_1(\hat{\vv}_1)-r_1(\hat{\vv}_2)-\langle\nabla r_1(\hat{\vv}_2),\hat{\vv}_1-\hat{\vv}_2\rangle \geq \frac{\mu}{2}\|\hat{\vv}_1-\hat{\vv}_2\|^2.
\end{align}
\end{assumption}
 The regularization term $r_1(\hat{\vv}_{k}) $ defined in \eqref{r_1} is strongly convex over the probabilistic simplex $\mathcal{V}_k$ with respect to the $\ell_1$-norm (see \cite{shalev2007logarithmic}, Definition 2 and Example 2). However,  note that the objective function $F_k(\theta,\hat{\vv}_k)$ in \eqref{eqn: FW_joint_obj} is not jointly convex with respect
to  $(\theta,\hat{\vv})$.

\begin{assumption}\label{ass: smooth}
The local cost $F_k(\theta,\hat{\vv}_{k})$ is $L$-smooth (possibly non-convex) with respect to $(\theta,\hat \vv_k)$ for $k\in [K]$, i.e.,
\begin{align}\label{L_smooth}
\nonumber&\|\nabla_{\theta}F_k(\theta,\hat{\vv}_{k})-\nabla_{\theta}F_k(\theta',\hat{\vv}'_{k})\|
\\
&\leq L\sqrt{\|\theta-\theta'\|^2+\|\hat{\vv}_{k}-\hat{\vv}_{k}'\|^2},
\end{align}for all $\theta, \theta'$ and $\hat \vv_k,\hat \vv'_k \in \mathcal{V}_k$.
\end{assumption}
\begin{assumption}\label{ass: un_var} Given $\hat{\vv}_k$,  $k\in [K]$, assume that the stochastic gradient satisfies the following conditions
\begin{align}
\label{eqn: unba}
&\mathbb{E}[g_k(\theta_k,\hat{\vv}_{k})]=\nabla_{\theta} F_k(\theta_k,\hat{\vv}_{k})\\
\label{eqn: var}&\mathbb{E}[\|g_k(\theta_k,\hat{\vv}_{k})-\nabla_{\theta} F_k(\theta_k,\hat{\vv}_{k})\|^2]\leq \sigma^2,
\end{align}
where $\sigma$ is the noise variance and $\mathbb{E}$  denotes the expectation with respect to all random variables $\{\xi_k,\zeta_k\}$.
\end{assumption}
Assumption \ref{ass: smooth} makes a standard smoothness assumption in non-convex optimization with two variables \cite{xu2015block}.
Assumption \ref{ass: un_var} is
a common assumption that the stochastic gradient noise is zero mean with bounded variance $\sigma^2$ \cite{karimireddy2020scaffold}.

\subsection{Convergence Analysis of Fed-SVHR}


Since variable $\hat{\vv}_{k}$ has the constraint $\mathcal{V}_k$, let us define the following optimality gap:
\begin{align}\label{eqn: gap}
g_t\triangleq \mathbb{E}\left[\|\nabla_{\theta}F(\theta^t,\hat{\vv}^{t+1})\|^2\right]+\sum_{k=1}^K\omega_k\|\hat{\vv}_{k}^{t+1}-\hat{\vv}_{k}^{t}\|^2.
\end{align}
From the definition, obviously, $g_t\geq 0$, and it has the following property.
\begin{property}\label{pro_1}
When $g_t=0$, the iterate $\{\theta^t,\hat{\vv}^t\}$ will be a stationary
point of problem \eqref{dist prob}.
\end{property}
\begin{proof}
See Appendix \ref{proof_property}.
\end{proof}
%
\begin{theorem}\label{theo: 1}
Assume Assumptions \ref{ass: reg}-\ref{ass: un_var} hold. If $\eta=\sqrt{\frac{K}{T\bar{\tau}}}\leq \{\frac{4\mu\alpha_1}{\bar{\tau}(2L^2+1 )},\frac{1}{L\sqrt{15a_{\tau}}},\frac{1}{2L\bar{\tau}}\}$, then the sequence
$\{\theta^t,\hat{\vv}^t\}$ generated by Algorithm \ref{alg: model_avg} satisfies
\begin{align}\label{eqn: outputs1}
\min_{t\in\{1,\ldots,T\}}g_t&\leq \frac{8\mathbb{E}[F(\theta^{1},\hat{\vv}^1)]}{\sqrt{KT\bar{\tau}}}
+\sqrt{\frac{K}{T\bar{\tau}}}c_1\sigma^2,
\end{align}
where  $c_1=\frac{8}{ \bar{\tau}}\Big(\frac{33}{8}a_{\tau}
+\frac{3}{2}\bar{\tau}^2\sum_{k=1}^K\frac{\omega_k^2}{\tau_k}\Big)L$, $\bar{\tau}=\sum_{k=1}^K\omega_k\tau_k$, $a_{\tau}=(\hat{\tau}-1)(2\hat{\tau}-1)$ and $\hat{\tau}=\max\{\tau_1,\ldots\tau_K\}$.
\end{theorem}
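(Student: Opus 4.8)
\textbf{Proof proposal for Theorem \ref{theo: 1}.}

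The plan is to run the standard ``descent lemma plus telescoping'' argument for nonconvex FL, but applied to the two-block objective $F(\theta,\hat\vv)$ and with care taken of (i) the gradient correction / normalized averaging that defines the server step, and (ii) the pseudo-label block update. First I would establish a one-round descent inequality of the form
\begin{align*}
\mathbb{E}[F(\theta^{t+1},\hat\vv^{t+2})] \le \mathbb{E}[F(\theta^{t},\hat\vv^{t+1})] - c\,\eta\bar\tau\,\mathbb{E}[\|\nabla_\theta F(\theta^t,\hat\vv^{t+1})\|^2] - c'\sum_k\omega_k\|\hat\vv_k^{t+1}-\hat\vv_k^{t}\|^2 + (\text{error terms}).
\end{align*}
To get the $\hat\vv$-part I would split $F(\theta^t,\hat\vv^{t+1})-F(\theta^t,\hat\vv^{t})$: since $\hat\vv^{t+1}$ is the exact minimizer of \eqref{update_y}, which by Assumption \ref{ass: reg} has an $\alpha_1\mu$-strongly convex objective in $\hat\vv$ (the $L_{\rm CE}$ term is linear in $\hat\vv$, so strong convexity comes entirely from $\alpha_1 r_1$), I can lower-bound the decrease by $\frac{\alpha_1\mu}{2}\|\hat\vv^{t+1}-\hat\vv^{t}\|^2$ up to the smoothness coupling with $\theta$; this is where the condition $\eta \le \frac{4\mu\alpha_1}{\bar\tau(2L^2+1)}$ will be used to absorb cross terms. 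For the $\theta$-part I would use $L$-smoothness (Assumption \ref{ass: smooth}) together with the identity from Remark \ref{remark_2} that rewrites the effective server update \eqref{eqn: fedSavg theta1} as $\theta^{t}-\theta^{t-1} = -\eta\sum_k\omega_k\sum_{q=1}^{\tau_k}\frac{\bar\tau}{\tau_k}(g_k(\theta_k^{t-1,q-1},\hat\vv_k^{t})+d_k^t)$, and crucially the fact $\sum_k\omega_k d_k^t=0$ from \eqref{eqn: d_k2}, so the expected update direction is $-\eta\bar\tau$ times an average of (stochastic) gradients evaluated along the local trajectories, which is close to $\nabla_\theta F(\theta^t,\hat\vv^{t+1})$.

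The core technical work is bounding the client-drift terms $\mathbb{E}\|\theta_k^{t-1,q-1}-\theta^{t-1}\|^2$, i.e., how far local iterates wander from the broadcast point within one round. I would bound these recursively in $q$: from \eqref{update_theta}, unroll $\theta_k^{t-1,q}$, use $\|a+b\|^2 \le 2\|a\|^2+2\|b\|^2$, Assumption \ref{ass: un_var} for the noise ($\sigma^2$ per step), $L$-smoothness to relate local gradients to $\|\nabla_\theta F(\theta^{t-1},\cdot)\|$, and close the recursion with the step-size restriction $\eta \le \frac{1}{2L\bar\tau}$ (or $\eta \le \frac{1}{L\sqrt{15 a_\tau}}$). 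The combinatorial constant $a_\tau=(\hat\tau-1)(2\hat\tau-1)$ arises exactly from the sum $\sum_{q=1}^{\hat\tau}(q-1)^2 = O(\hat\tau^3)$ that appears when you bound the accumulated drift over the worst-case number of local steps; the $\sum_k \omega_k^2/\tau_k$ factor in $c_1$ comes from the variance of the averaged stochastic gradient across clients. I would also need to control the ``staleness'' between $\hat\vv^{t}$ (used in round $t-1$'s gradients, which feed $d_k^t$) and $\hat\vv^{t+1}$ (the argument of $F$ in the potential); a Lipschitz/smoothness bound converts this into a $\|\hat\vv^{t+1}-\hat\vv^{t}\|^2$ term that gets absorbed into the negative $\hat\vv$-decrease from the first paragraph — this is why the $\hat\vv$ step size condition involves $2L^2+1$.

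Once the one-round inequality holds with strictly positive coefficients on both $\mathbb{E}\|\nabla_\theta F\|^2$ and $\sum_k\omega_k\|\hat\vv_k^{t+1}-\hat\vv_k^t\|^2$ (after all cross/error terms are absorbed using the three step-size bounds), I would telescope over $t=1,\ldots,T$. The $F$ values telescope to $\mathbb{E}[F(\theta^1,\hat\vv^1)] - F^\star \le \mathbb{E}[F(\theta^1,\hat\vv^1)]$ (dropping $F^\star\ge 0$, or whatever lower bound is implicit), the residual noise terms sum to $T\eta^2\bar\tau\,(\cdots)\sigma^2$, and dividing by $T$ and by the coefficient $\Theta(\eta\bar\tau)$ gives
\begin{align*}
\min_{t}g_t \le \frac{\text{const}\cdot\mathbb{E}[F(\theta^1,\hat\vv^1)]}{\eta\bar\tau T} + \text{const}\cdot\eta\,(\cdots)\sigma^2.
\end{align*}
Substituting $\eta=\sqrt{K/(T\bar\tau)}$ makes both terms $O(1/\sqrt{KT\bar\tau})$ and $O(\sqrt{K/(T\bar\tau)})$ respectively, matching \eqref{eqn: outputs1} after tracking the numerical constants ($8$, $\tfrac{33}{8}$, $\tfrac32$) through the inequalities. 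I expect the main obstacle to be the bookkeeping in the client-drift recursion — keeping the coupling between the $\theta$-drift, the correction-term history $d_k^t$ (which depends on \emph{last} round's full trajectory), and the pseudo-label staleness all under control simultaneously, and verifying that the three step-size conditions are exactly what is needed to make every error coefficient nonnegative after absorption. The two-block structure means I cannot cite an off-the-shelf FedNova/SCAFFOLD bound directly; the novelty (and the delicate part) is showing the $\hat\vv$-block's exact minimization contributes a clean strongly-convex decrease that dominates its own staleness error.
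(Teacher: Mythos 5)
Your overall skeleton matches the paper's proof: one-round descent from $L$-smoothness in $\theta$, the exact $\hat\vv$-block minimization yielding a clean $\frac{\mu\alpha_1}{2}\sum_k\omega_k\|\hat\vv_k^{t+1}-\hat\vv_k^{t}\|^2$ decrease (the paper's step leading to \eqref{eqn: vv3}), the identity $\sum_k\omega_k d_k^t=0$ so the server step is $-\eta\bar\tau$ times an average of stochastic gradients along local trajectories, the constant $a_\tau$ arising from $\sum_{q}(q-1)^2$, the $\sum_k\omega_k^2/\tau_k$ factor from averaging the noise, and the final telescoping with $\eta=\sqrt{K/(T\bar\tau)}$ and $F\ge 0$. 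However, there is a genuine gap at exactly the point you flag as "the main obstacle" and then leave unresolved: you claim the client-drift recursion can be closed within a round by the step-size restriction, but it cannot. By \eqref{gradient_corr}/\eqref{eqn: d_kt}, the correction $d_k^t$ used in every local step \eqref{update_theta} of round $t$ is a combination of stochastic gradients evaluated at the \emph{previous} round's local iterates $\theta_k^{t-1,q-1}$ with the stale labels $\hat\vv_k^{t}$. Hence any bound on the round-$t$ drift $\Xi^t:=\sum_k\frac{\omega_k}{\tau_k}\sum_q\mathbb{E}\|\theta^t-\theta_k^{t,q-1}\|^2$ unavoidably contains the cross-round quantity $\Pi^t:=\sum_k\frac{\omega_k}{\tau_k}\sum_q\mathbb{E}\|\theta^t-\theta_k^{t-1,q-1}\|^2$, which no choice of $\eta$ eliminates; your per-round descent inequality would therefore carry a leftover positive term from round $t-1$ that does not telescope.

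The missing idea in the paper is a Lyapunov (potential) construction that resolves this coupling: one proves two intertwined drift lemmas, $\Pi^{t+1}\le 4(1+\eta^2L^2\bar\tau^2)\Xi^t+\cdots$ (Lemma \ref{lem: Pi}) and $\Xi^t\le\frac{1}{8}\Pi^t+\frac{11}{8}\eta^2 a_\tau\mathbb{E}\|\nabla_\theta F(\theta^t,\hat\vv^{t+1})\|^2+\frac{11}{8}\eta^2L^2a_\tau\sum_k\omega_k\|\hat\vv_k^{t+1}-\hat\vv_k^t\|^2+\cdots$ (Lemma \ref{lem: xx1}), and then runs the descent argument on the augmented potential $\mathbb{E}[F(\theta^t,\hat\vv^t)]+\frac{L}{8}\Pi^t$ rather than on $F$ alone (Lemma \ref{lem: decsent}). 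With this potential the stale-gradient contribution appears as $\frac{L}{8}(\Pi^t-\Pi^{t+1})$ and telescopes, using $\Pi^1=0$, which is precisely what makes the $\mathcal{O}(1/\sqrt{T})$ bound in \eqref{eqn: outputs1} come out with the stated constant $c_1$; it is also where the contraction factor $\frac{1}{8}$ (hence the $\eta\lesssim 1/(L\sqrt{a_\tau})$ condition) and the absorption of the $\|\hat\vv^{t+1}-\hat\vv^t\|^2$ staleness into the strongly convex decrease (hence the condition involving $\mu\alpha_1$) are actually used. Your proposal names all the right error sources but supplies no mechanism for the cross-round recursion, so as written the telescoping step does not go through.
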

\begin{proof}
See Appendix \ref{proof_theorem}.
\end{proof}
Theorem \ref{theo: 1} shows that the proposed Fed-SHVR has a convergence rate of $\mathcal{O}(1/\sqrt{T})$. To the best of our knowledge, this is the first result that shows the convergence rate for Fed-SSL optimization.
\begin{remark}{\rm
In \cite{liang2019variance,wang2020tackling}, the authors have proved that VRL-SGD and FedNova have a  convergence rate of $\mathcal{O}(1/\sqrt{T})$.
However, their proofs cannot be applied to the proposed Fed-SVHR since Fed-SVHR combines both the variance reduction and normalized averaging techniques, in addition to that Fed-SHVR involves two blocks of variables. Besides, different from \cite{wang2020tackling}, our analysis does not require the assumption on the boundedness of gradient dissimilarity, i.e., $\sum_{k=1}^K\omega_k\|\nabla_{\theta} F_k(\theta_k,\hat{\vv}_{k})-\nabla_{\theta} F(\theta,\hat{\vv})\|^2\leq \kappa^2$, where $\kappa$ is a constant. Thus, new  proof techniques are developed in our proof in order
 to establish the convergence rate in \eqref{eqn: outputs1}; details are given in Appendix \ref{proof_theorem}.}
\end{remark}
\section{Numerical Results}\label{sec: Simu}
In this section, we examine the numerical performance of the proposed Fed-SVHR algorithm
and present comparison results with the existing methods for MNIST data and CIFAR-10 data.
\subsection{MNIST data}
Let us consider the MNIST digit recognition task.  The data is split into 60000
 images for training and 10000 images for testing.  There are $K=10$ clients to study federated optimization. In addition,
 two ways of dividing  the MNIST data over clients:
 \begin{itemize}
\item \textbf{IID}: Each client is randomly assigned a uniform distribution over 10 classes and receives 6000 examples, which contains 60 labeled data and 5940 unlabeled data.
\item \textbf{Non-IID}: Each client has only 60 labeled data of two digits, and
59400 unlabeled data are randomly partitioned
across 10 clients using a Dirichlet distribution $\textmd{Dir}_{10}(0.1)$ \cite{wang2021demystifying}. Thus  the class distribution between labeled data and unlabeled data in each client is  mismatched.
\end{itemize}
 We run each experiment with 5 random seeds
and report the average.
Consider a simple Multi-layer perceptron (MLP) with one-hidden layers including 5000 units each using ReLU activations. The number of mini-batch is $B_l=32$ for labeled data, $B_u=32$ for unlabeled data. In \eqref{update_theta}, we set the learning rate $\eta=0.01$, $\alpha_1=0.75$, $\alpha_2=0.1$ and $\alpha_{0}$ ramp up its weight from 0 to
its final value during the first 50 epochs. All
clients perform $E=2$ local epochs, then the number of local SGD iterations is $\tau_k\in [270,500]$ and $\tau_k=371$ for Non-IID case and IID case, respectively.

\textbf{Example 1:} Intuitively, the popular methods mean-teacher \cite{tarvainen2017} and pseudo-labeling \cite{lee2013pseudo} can be combined with FedAvg \cite{McMahan2017}, called Fed-MT and Fed-Pseudo, respectively. Fig. \ref{test_accuracy} presents the test accuracy of the proposed Fed-SHVR and the other algorithms including Fed-MT, Fed-Pseudo and vanilla FedAvg under IID and Non-IID cases.
It reveals that the performance of the proposed Fed-SHVR is much better than that of  Fed-MT and Fed-Pseudo whether for the IID case or  Non-IID case. Meanwhile, all of the Fed-SSL algorithms perform better than the supervised FedAvg that only uses 600 labeled data. Compared Fig. \ref{test_accuracy} (a) with Fig. \ref{test_accuracy} (b), the Non-IID case with mismatched class distribution degrades the performance of Fed-SSL. Fortunately, our method introduces two regularized terms, which can help the Fed-SSL to obtain higher test accuracy.

\textbf{Example 2:} We evaluated the performance of each federated  SSL technique on IID case with 600 labeled data samples and varying amounts of unlabeled data, which resulted in the test accuracy shown in Fig. \ref{unlabel_client_accuracy} (a). Compared with  FedAvg, increasing the number of unlabeled data tends to improve the performance of Fed-SSL
techniques while the proposed Fed-SVHR performs best. Fig. \ref{unlabel_client_accuracy} (b) shows the test accuracy curve when increasing the number of clients. From  Fig. \ref{unlabel_client_accuracy} (b), we see that the performance of Fed-SLL methods is with a slight change, thus Fed-SSL methods are robust to the number of the client while they perform better than FedAvg that only uses labeled data.
\begin{figure*}[t]
\centering
\vspace{-1cm}
\subfigure[IID]{
\begin{minipage}{0.5\linewidth}
\centering
\includegraphics[width=3in]{./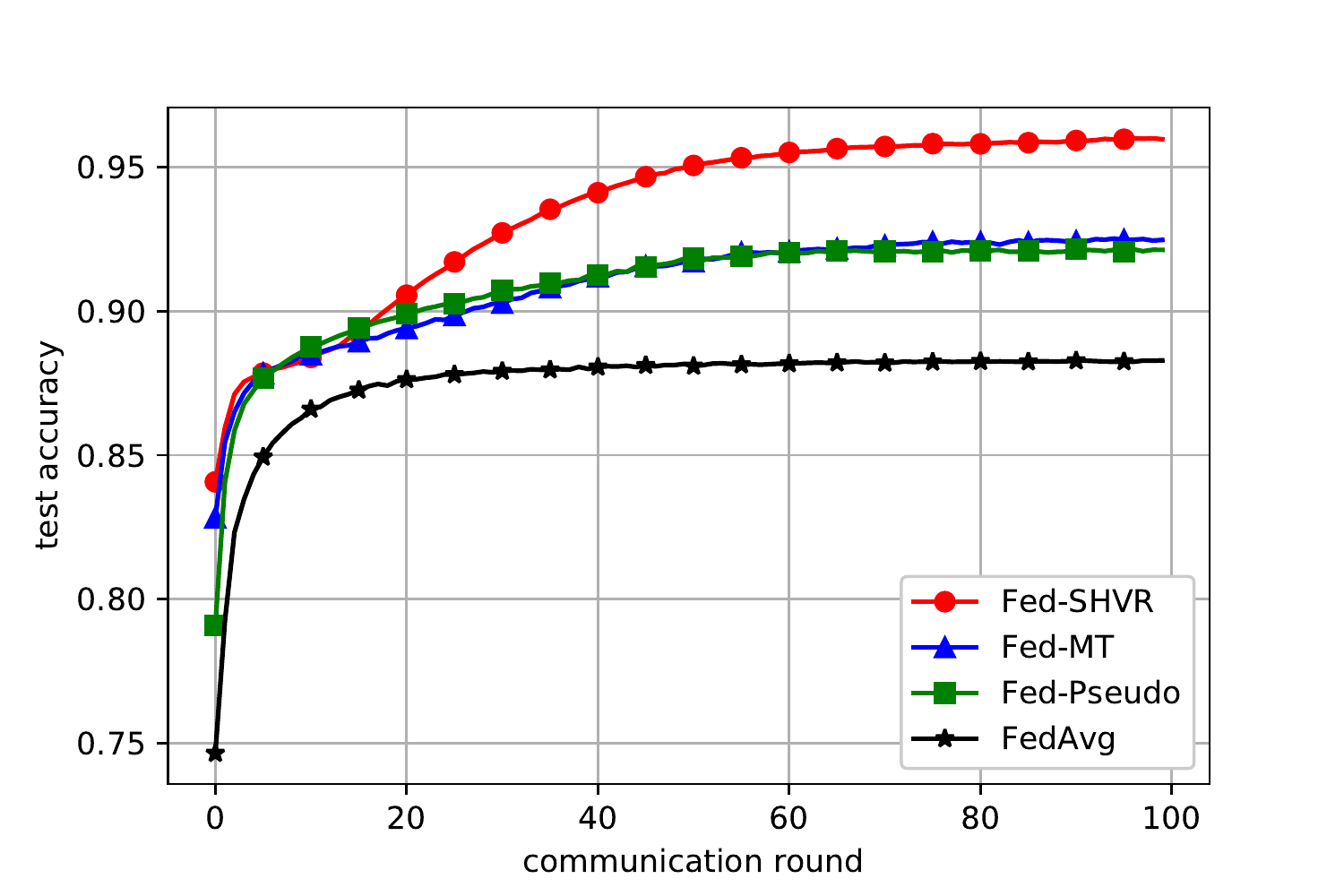}
\end{minipage}%
}%
\hspace{-2.1cm}
\subfigure[Non-IID]{
\begin{minipage}{0.5\linewidth}
\centering
\includegraphics[width=3in]{./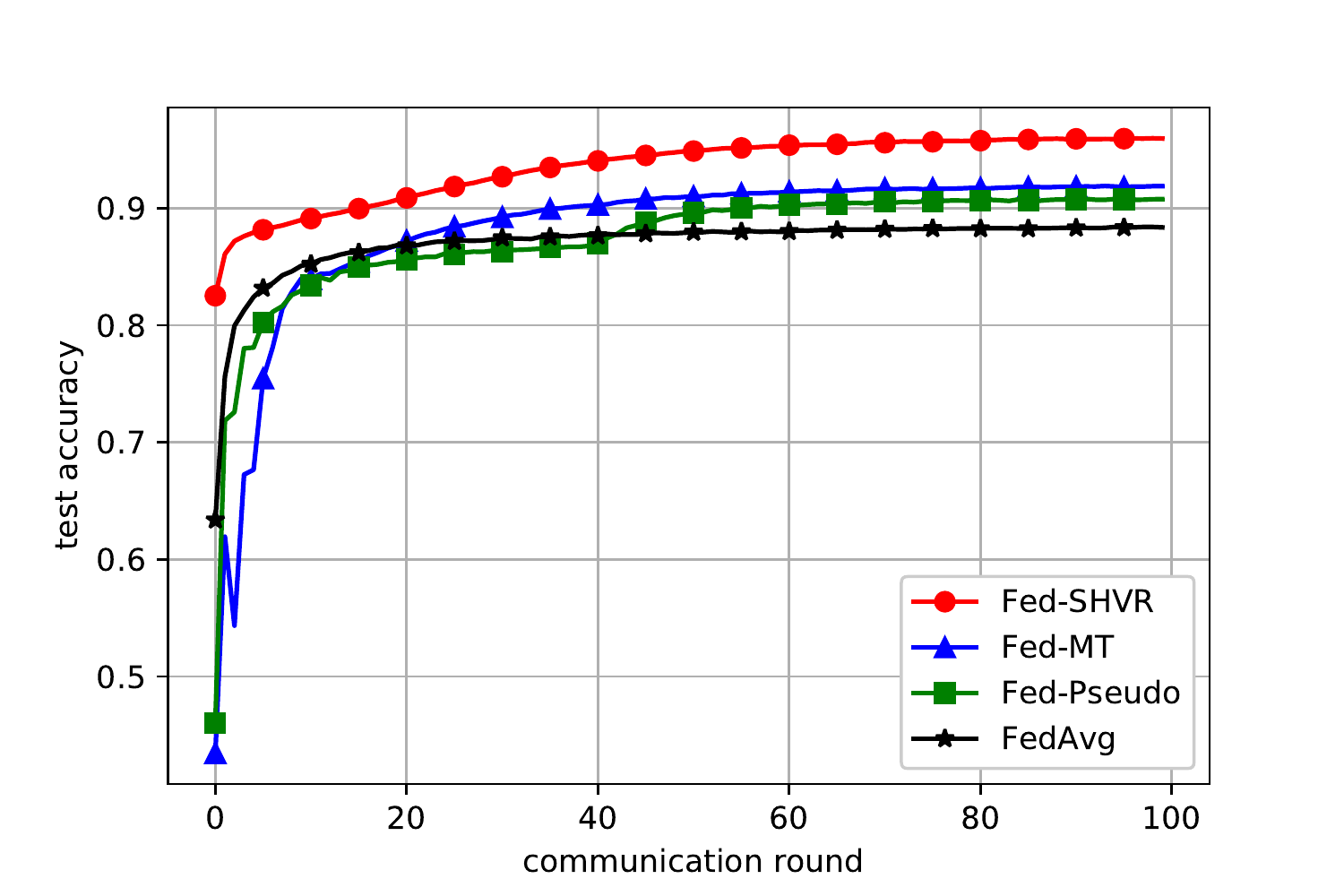}
\end{minipage}%
}%
\centering
\caption{Test accuracy curves of the proposed Fed-SVHR and the baseline on IID \& Non-IID cases when varying the number of communication rounds. }
\label{test_accuracy}
\end{figure*}
\begin{figure*}[t]
\centering
\vspace{-0.5cm}
\subfigure[unlabeled data increasing]{
\begin{minipage}[t]{0.5\linewidth}
\centering
\includegraphics[width=3in]{./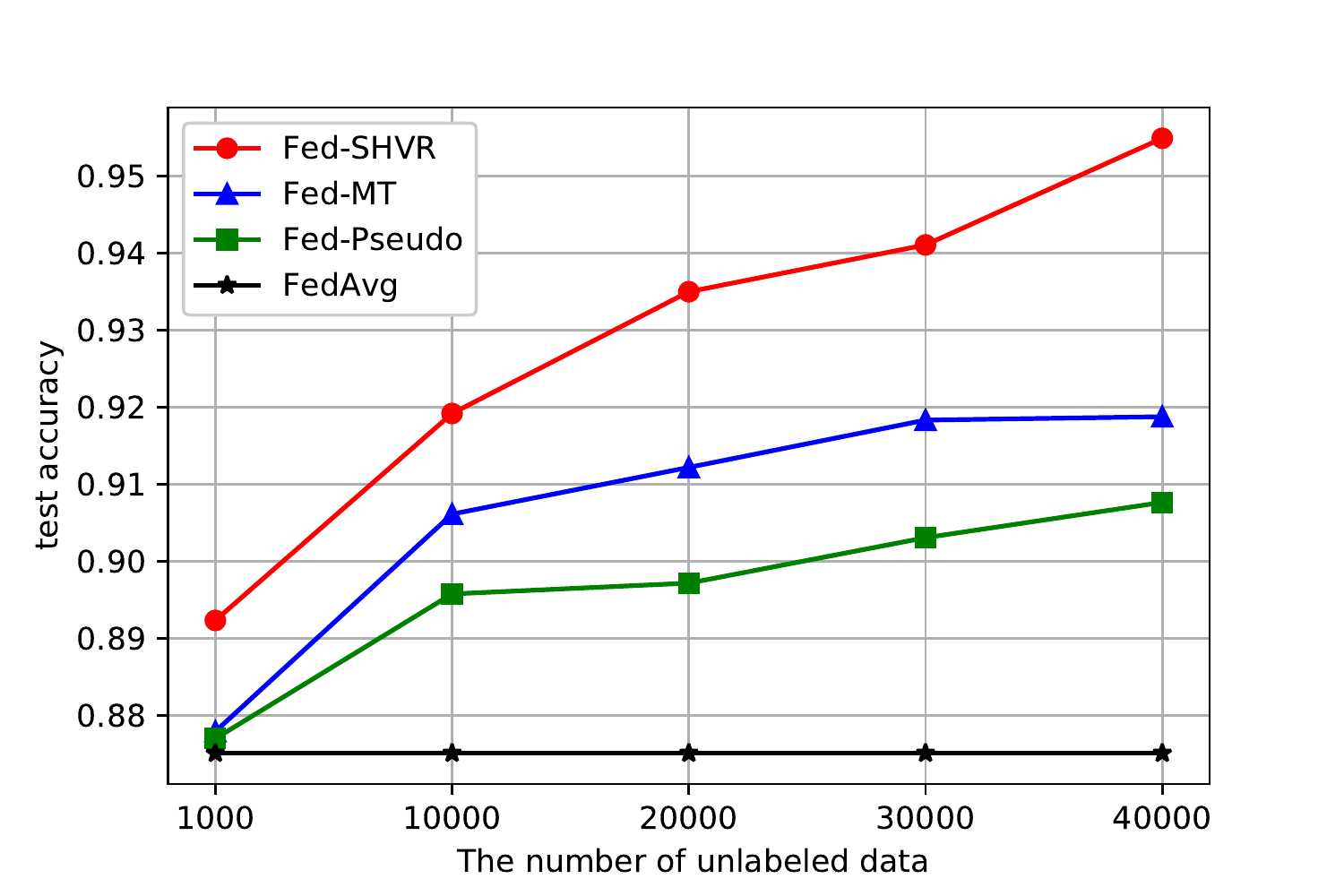}
\end{minipage}%
}%
\hspace{-2.1cm}
\subfigure[Clients increasing]{
\begin{minipage}[t]{0.5\linewidth}
\centering
\includegraphics[width=3in]{./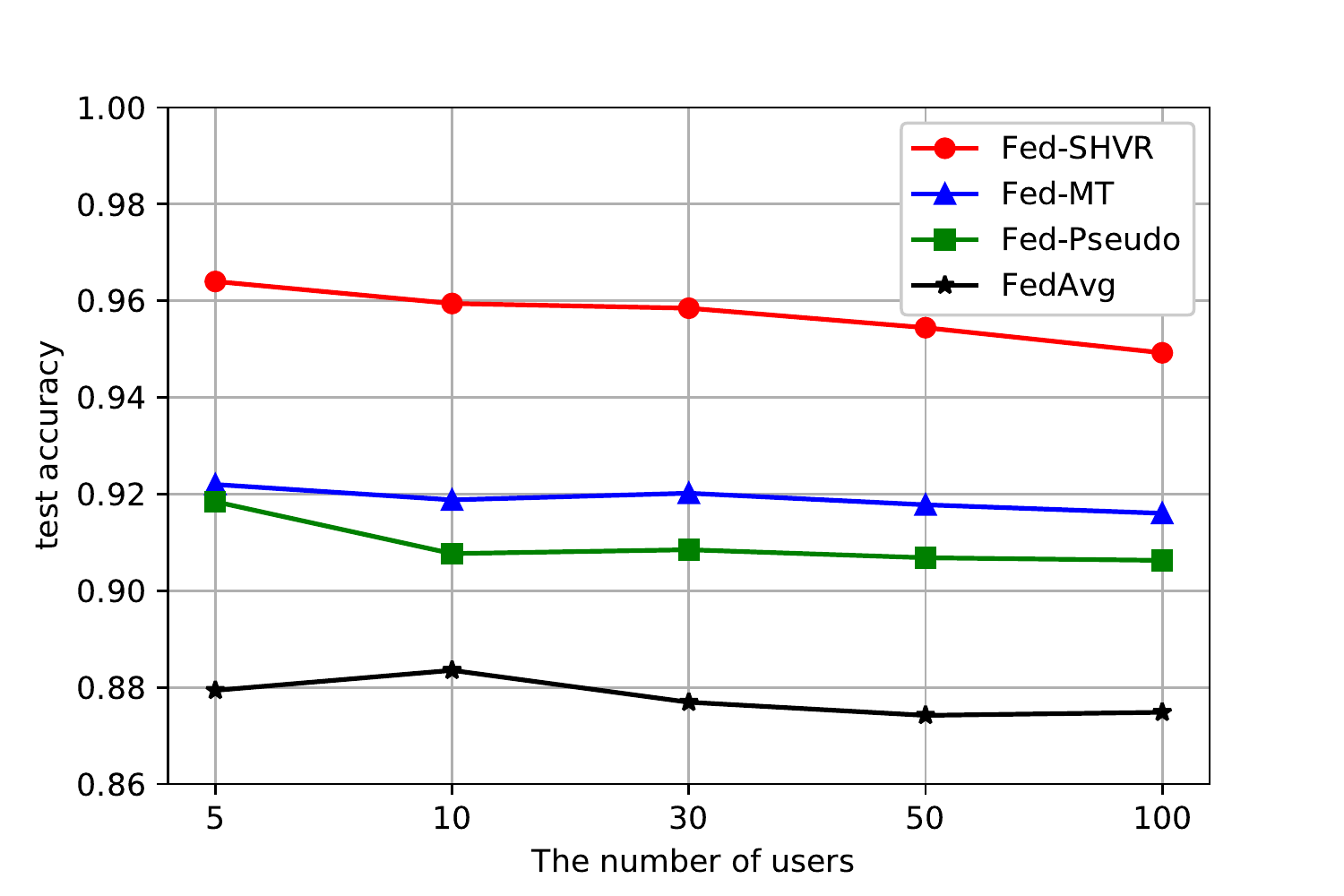}
\end{minipage}%
}%
\centering
\caption{The test accuracy curves of the proposed Fed-SVHR and the baseline on IID case when varying the number of unlabeled data and clients.
 }
\label{unlabel_client_accuracy}
\vspace{-0.2cm}
\end{figure*}

\begin{figure*}[t]
\centering
\subfigure[training loss]{
\begin{minipage}[t]{0.5\linewidth}
\centering
\includegraphics[width=3in]{./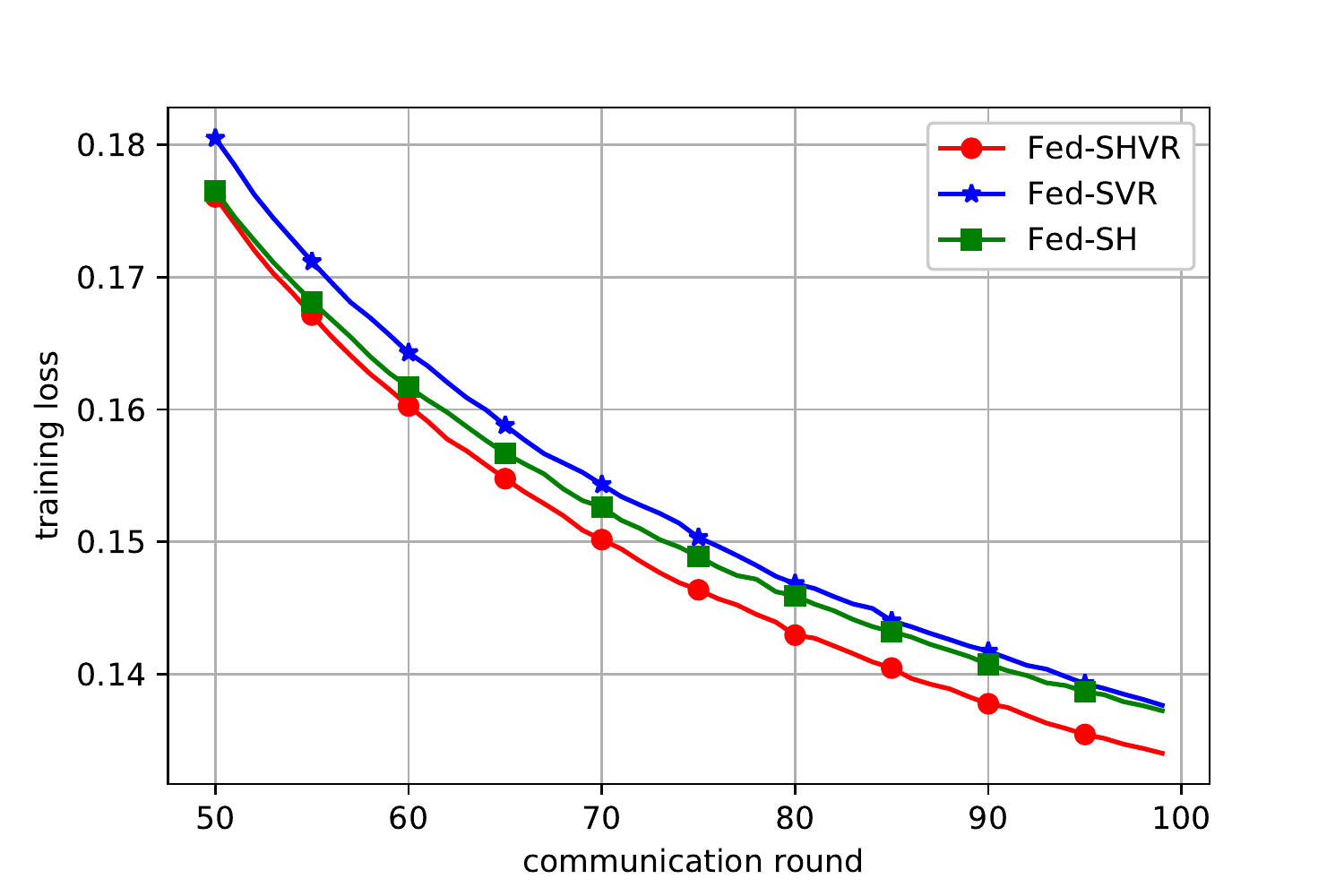}
\end{minipage}%
}%
\hspace{-2.1cm}
\subfigure[std]{
\begin{minipage}[t]{0.5\linewidth}
\centering
\includegraphics[width=3in]{./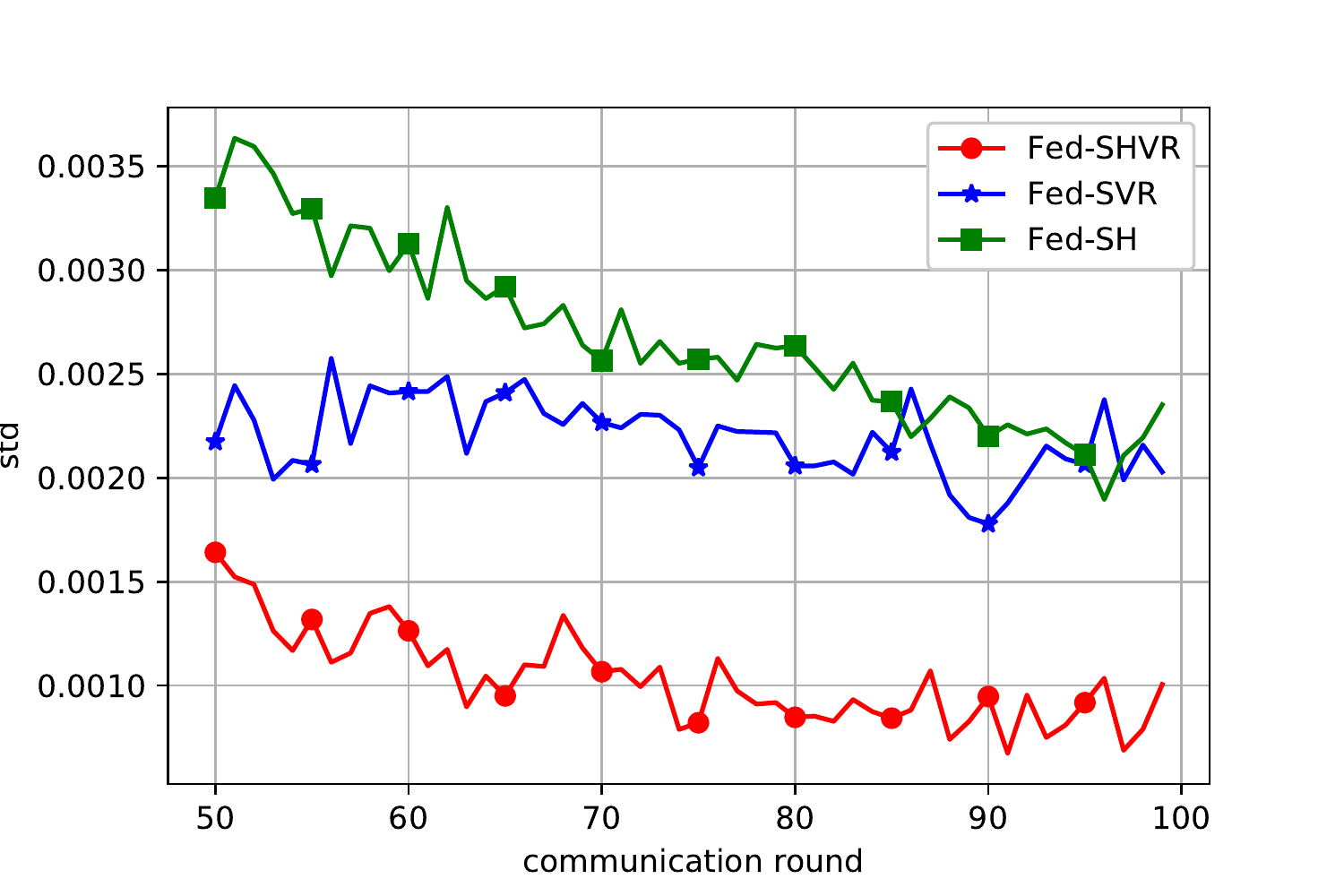}
\end{minipage}%
}%
\vspace{-0.2cm}
\centering
\caption{Comparison of proposed algorithms Fed-SVHR, Fed-SH, Fed-SVR in terms of training loss and standard deviation on Non-IID case.}
\label{training_loss}
\end{figure*}

\textbf{Example 3:} We consider the ablation study. The proposed Fed-SHVR is a combination of FedNova and VRL-SGD, which  takes heterogeneity local SGD iterations and variance reduction technique. In this subsection, if we use FedNova update model parameter without gradient correction term $d_k^t$ in \eqref{update_theta}, then Fed-SHVR degrades to Fed-SH. Similarly, we call Fed-SVR that only uses the variance reduction technique of VRL-SGD but makes heterogeneous local SGD iterations. When varying the communication round at the Non-IID setting, Fig. \ref{training_loss} shows the training loss and standard deviation (std) curve. Here, at communication round $t$,
$
 {\rm std}_t=\sqrt{\frac{\sum_{i=1}^5(p_t^i-\bar{p}_t)^2}{5}}
$
 where $p_t^i$ is the prediction accuracy of the global model for $i$-th experiment, and $\bar{p}_t=\frac{1}{5}\sum_{i=1}^5p_t^i$. From Fig. \ref{training_loss} (a), it shows that Fed-SHVR reaches the smallest training loss. Fig. \ref{training_loss} (b) reveals that the Fed-SSL with variance reduction technique can reduce the variance among workers while Fed-SHVR has the smallest variance.

\begin{table}[t]
\centering
\caption{Test accuracy of the proposed Fed-SHVR by using different parameters $\alpha_1$ and $\alpha_2$ on IID case \& Non-IID case. }
\renewcommand\tabcolsep{2.5pt}
\begin{tabular}{|l|l|l|l|l|l|}
\hline \multicolumn{2}{|c|}{} & \multicolumn{2}{|c|}{IID} & \multicolumn{2}{|c|}{Non-IID} \\
\hline \multicolumn{2}{|c|}{Communication round} & $T=1$&$T=100$ &$T=1$&$T=100$ \\
\hline \multicolumn{2}{|c|}{FedAvg with 600 labeled data}  &58.5\%&88\% & 37.8\% &87.5\%\\
\hline \multicolumn{2}{|c|}{FedAvg with 60000 labeled data}  &94.4\%&98.5\% & 94.3\% &98.4\%\\
\hline  & $\alpha_1=\alpha_2=0$&81.4\% &92.5\% &34.7\% &92.4\% \\
\cline { 2 - 6 } Fed  & $\alpha_1=0.75,\alpha_2=0$ &83.5\%&92.7\%&\textbf{74.4 \%}&\textbf{92.7\%} \\
\cline { 2 - 6 } -SHVR& $\alpha_1=0,\alpha_2=0.1$ &83.1\%&95.2\%&\textbf{72.9\%} & \textbf{95.4\%}\\
\cline { 2 - 6 } & $\alpha_1=0.75,\alpha_2=0.1$ & 83.6\%&96.6\%&82.2\%&95.5\% \\
\hline
\end{tabular}
\label{tab: accuracy_para}
\end{table}
\begin{figure*}[t]
\centering
\subfigure[Fed-SHVR with $\alpha_1=0.75, \alpha_2=
0$]{
\begin{minipage}[t]{0.5\linewidth}
\centering
\includegraphics[width=3in]{./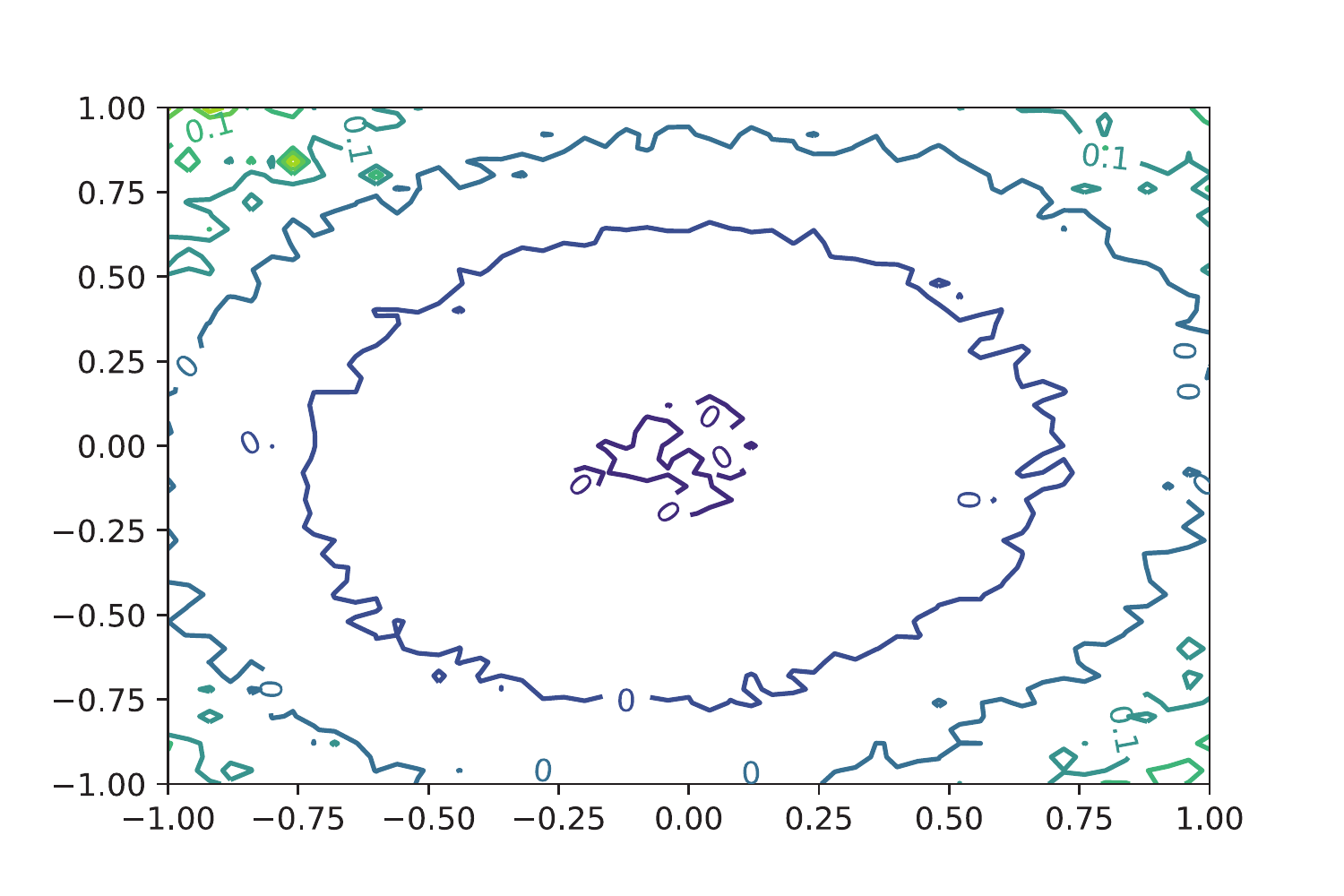}
\end{minipage}%
}%
\hspace{-2.1cm}
\subfigure[Fed-SHVR with $\alpha_1=0.75, \alpha_2=
0.1$]{
\begin{minipage}[t]{0.5\linewidth}
\centering
\includegraphics[width=2.7in]{./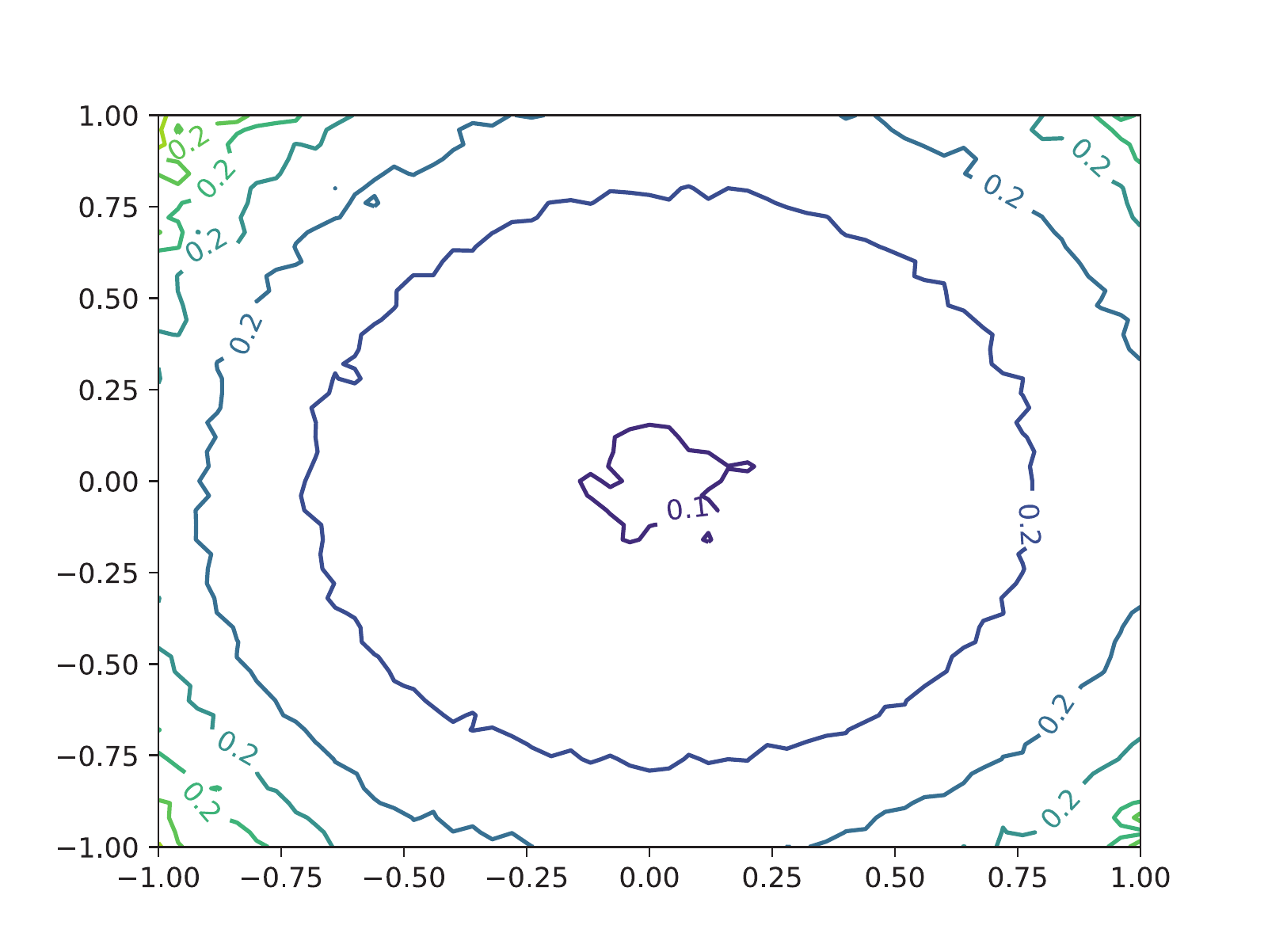}
\end{minipage}%
}%
\centering
\caption{The 2D visualization of solutions obtained using Fed-SHVR with different weight coefficients $\alpha_2$ for the model regularization term. Note that the center
of each plot corresponds to the minimizer, and the two axes parameterize two random directions in \eqref{Q1}.}
\label{landscape}
\end{figure*}

\textbf{Example 4:} We discuss the effectiveness of the two regularization terms in \eqref{r_1} and \eqref{eqn: r2}. If $\alpha_1=\alpha_2=0$, it means that  the local cost function of each client does not have any regularization, which is similar to the loss function \eqref{joint1} for the centralized pseudo label prediction problem. If $\alpha_1=0.75$ and $\alpha_2=0$, it means the local cost function only uses the regularizer for the pseudo label. If $\alpha_1=0$ and $\alpha_2=0.1$, it means the objective function only has the regularizer for the model output. If $\alpha_1=0.75$ and $\alpha_2=0.1$, it means the objective function has  two regularization terms. Table \ref{tab: accuracy_para} shows the test accuracy of the proposed Fed-SHVR method with different weight
coefficients for the two regularization terms in \eqref{eqn: FW_joint_obj}. Compared with FedAvg  that only uses labeled data, the proposed Fed-SHVR performs better whenever we use the two regularization terms, which implies that the unlabeled data can improve the performance. For the Non-IID case,   Fed-SHVR with $\alpha_1=0.75$ performs much better than that with $\alpha_1=0$ at the first communication round $T=1$. It reveals that the regularizer $r_1(\hat{\vv}_k)$ can speed up the convergence at early training iterations, especially, when the class distribution between labeled data and unlabeled data is  mismatched. Table \ref{tab: accuracy_para} also shows that the model regularizer $r_2({\theta})$ helps us to improve the final accuracy at communication round $T=100$. In other words, the confidence penalty $r_2({\theta})$  leads to smoother
output distributions, which results in better generalization. Moreover, we explore how regularizer  affects the loss landscapes on generalization \cite{li2018visualizing}. Fig. \ref{landscape} plots a function of the form
\begin{align}\label{Q1}
Q(\beta_1,\beta_2)=F(\theta^{100}+\beta_1\delta_1+\beta_2\delta_2,\hat{\vv}^{100})
\end{align}
 in the $2$D surface, where $F(\cdot,\cdot)$ is defined in \eqref{eqn: FW_joint_obj}, $\delta_1$ and $\delta_2$ are two direction vectors, $(\theta^{100},\hat{\vv}^{100})$ is model parameter and pseudo label at  communication round $T=100$. From Fig. \ref{landscape}, we observe that the loss function with $r_2({\theta})$ in \eqref{eqn: FW_joint_obj} has a smooth landscape that produces the minimizers generalize better.

\subsection{CIFAR-10}
We run experiments using CIFAR-10 benchmark, which contains 32x32 pixel RGB images belonging
to 10 output classes (``airline",``frog", ``automobile", ``bird",
``cat", ``dog",``deer",   ``horse", ``ship", ``trunk"). CIFAR-10 consists of 50000 training samples and 10000 test samples. In
our experiments, we
consider FL setting with 40 clients. In each client,
we construct the labeled data with 100 images, but these images only have 6-classes (such as, bird, cat, deer, dog, trunk, ship). Note that another client may have different 6 classes (such as, frog, horse, airline, automobile, bird, cat). Thus, the total number of labeled data is 4000 but the labeled data for the clients may have a different distribution. Let the total number of unlabeled data be 20000, which
 is randomly partitioned
across 40 clients using a Dirichlet distribution $\textmd{Dir}_{40}(0.025)$. Thus the amount of the unlabeled data in each client may be different.

Similar to \cite{guo2020safe}, we vary
the ratio of unlabeled images from 6-classes to modulate class
distribution mismatch in each client. For example, when the extent of labeled/
unlabeled class mismatch ratio is 1, all unlabeled data comes from the other 4-classes while the extent is 0.5 means half of the unlabeled data comes from classes 6-classes
and the others come from 4-classes.

All the methods in the comparison use a similar 13-layer ConvNet (CNN13) architecture with
the same initial value. CNN13 model is popularly used in SSL \cite{laine2016,tarvainen2017,miyato2018virtual}. The number of mini-batch is $B_l=32$ for labeled data, $B_u=32$ for unlabeled data. We set the learning rate $\eta=0.01$,
regularity coefficients $\alpha_1=0.75$ and $\alpha_2=0.1$. All clients perform $E=2$ local epochs but  the number of local SGD iterations $\tau_k$ is varying from 40 to 90 in our simulation.
\begin{figure*}[t]
\centering
\subfigure[mismatch ratio]{
\hspace{-0.6cm}
\begin{minipage}[t]{0.5\linewidth}
\centering
\includegraphics[width=3in]{./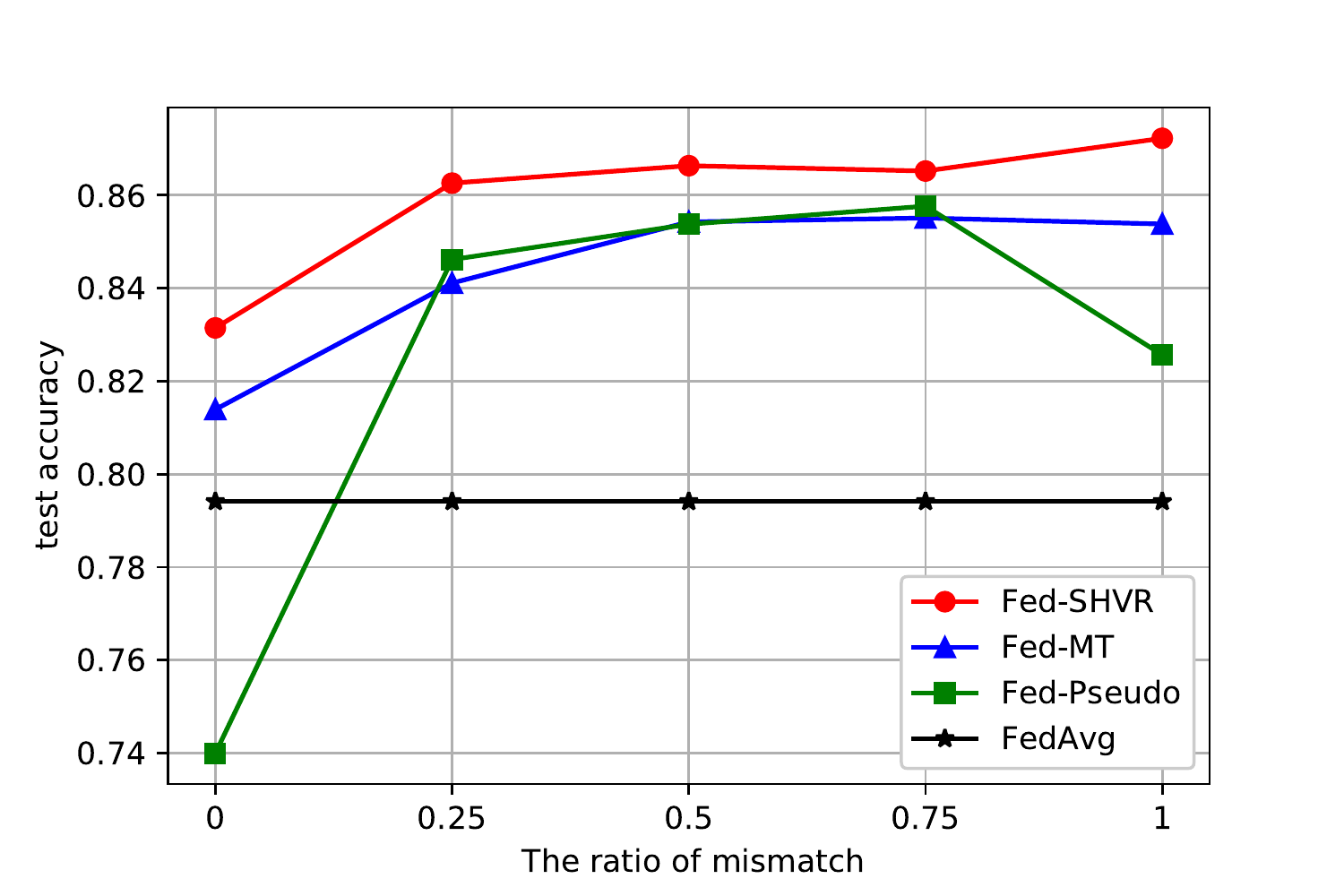}
\end{minipage}%
}%
\hspace{-2.1cm}
\subfigure[communication round]{
\hspace{-0.15cm}
\begin{minipage}[t]{0.5\linewidth}
\centering
\includegraphics[width=3in]{./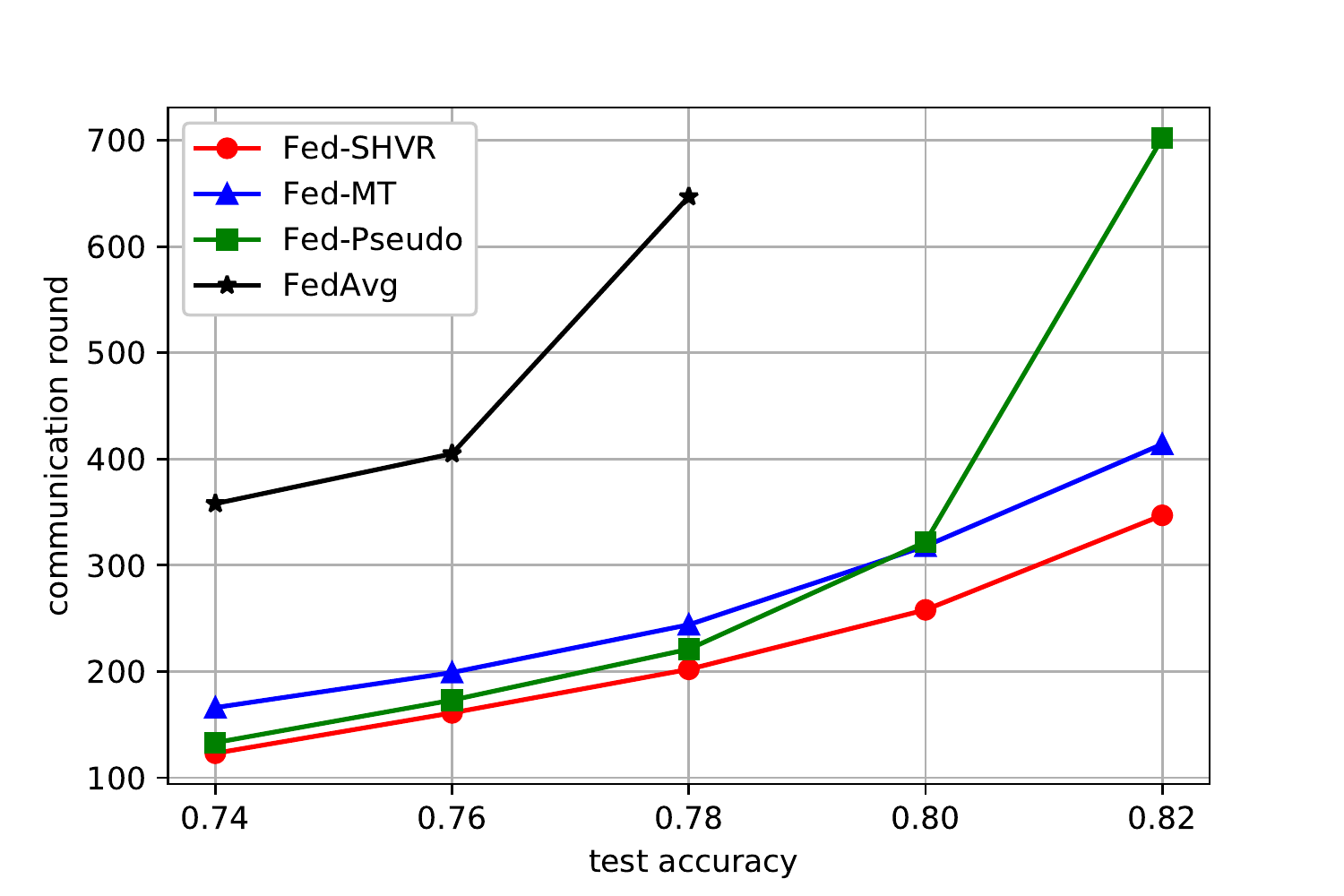}
\end{minipage}%
}%
\centering
\caption{(a): The test accuracy curves of the proposed Fed-SVHR  when varying the mismatch ratio; (b): Comparison of proposed algorithms Fed-SVHR and Fed-Pseudo, Fed-MT in terms of communication round. }
\label{fig:mismatch_commu_round_test}
\vspace{-0.1cm}
\end{figure*}

Fig. \ref{fig:mismatch_commu_round_test} (a) shows the test accuracy with 1000 communication round  of the proposed Fed-SVHR and the other federated semi-supervised methods, such as Fed-MT and Fed-Pseudo. Interestingly, it implies that the test accuracy of the proposed Fed-SVHR is raising with increasing the mismatch ratio. When the mismatch ratio is equal to 1, which means the distribution of labeled data is different from that of unlabeled data for each client. Reference \cite{guo2020safe} has shown the performance of centralized SSL method decreases significantly as
class mismatches between labeled and unlabeled data increase.
 However, the proposed semi-supervised method under federated setting does perform better than Fed-SL with the increasing of class mismatches, and one possible reason is  that Fed-SVHR and Fed-MT can use other client's information to improve the test accuracy. Comparing  Fed-SVHR with Fed-Pseudo, the proposed Fed-SVHR uses two regularization terms to eliminate the risk of class distribution mismatch. Fig. \ref{fig:mismatch_commu_round_test} (b) presents the required number of communication rounds versus the achieved test accuracy. When one wants to achieve the test accuracy with 0.76, the Fed-SSL methods  save around 200 communication rounds compared with FedAvg. In addition, the proposed Fed-SVHR uses the smallest number of communication rounds, and thus it's the most communication efficient method.

\section{Conclusion}\label{sec: conc}
We have  proposed a novel Fed-SSL algorithm for fully utilizing both labeled and unlabeled data in a heterogeneous FL setting, where the clients may have a  small amount of labeled data and a large number of unlabeled data. To overcome the challenges caused by non-i.i.d. data distribution and class distribution mismatch, we have introduced two regularization terms in \eqref{dist prob} and developed the new Fed-SSL algorithm (Algorithm \ref{alg: model_avg}), which adopts the variance reduction and normalized averaging techniques.
 We have proved that the proposed algorithm has a convergence rate of $\mathcal{O}(1/\sqrt{T})$, where $T$ is the number of communication rounds. Numerical experiments have  shown that the proposed algorithm greatly outperforms exiting Fed-SSL baselines and exhibits robust classification performance in scenarios with non-i.i.d. data and class distribution mismatch.

\vspace{-0.0cm}
		\appendices
\section{Proof of Lemma \ref{lem: pseudo}}\label{proof_lemma}
The Lagrangian function of \eqref{update_y} is given by
\begin{align}\label{LVL}
\nonumber L(\hat{\vv}_{k},\lambda) &=  -\frac{\alpha_0}{M_k} \sum_{i=1}^{M_k}\langle \hat{v}_{k,i},\log\big(f_{\theta}(u_{k,i})\big)\rangle\\
 &+\frac{\alpha_1}{M_k}\sum_{i=1}^{M_k}  {\rm KL}(\hat{v}_{k,i},\vu) +\sum_{i=1}^{M_k}\lambda_i(e^\top\hat{v}_{k,i}-1).
\end{align}
The optimal solution should satisfy the following condition
\begin{align}
\label{eqn: KKT1}&\nabla_{\hat{v}_{k,i}} L(\hat{\vv}_{k},\lambda) = 0\\
\label{eqn: KKT2}&e^\top\hat{v}_{k,i}-1=0, \hat{v}_{k,i}\geq0.
\end{align}
Substituting \eqref{LVL} into \eqref{eqn: KKT1}, it can be rewritten as
\begin{align*}
-\frac{\alpha_0}{M_k}\log\big(f_{\theta}(u_{k,i})\big)+\frac{\alpha_1}{M_k}\nabla_{\hat{v}_{k,i}} {\rm KL}(\hat{v}_{k,i},\vu)+\lambda_ie=0.
\end{align*}
Then, we get
\begin{align*}
\nabla_{\hat{v}_{k,i}} r_1(\hat{v}_{k,i})=\Big[\log\left(\frac{[\hat{v}_{k,i}]_1}{1/C}\right)+1,\ldots,
\log\left(\frac{[\hat{v}_{k,i}]_C}{1/C}\right)+1\Big]^T,
\end{align*}
where $[\hat{v}_{k,i}]_j, j=1,\ldots,C$, denote the value of class $j$. Combing the above two equations, for any $j$, we obtain
 \begin{align*}
 -\alpha_0\log\big([f_{\theta}(u_{k,i})]_j\big)+\alpha_1\left(\log\Big(\frac{[\hat{v}_{k,i}]_j}
 {1/C}\Big)+1\right)
 +M_k\lambda_i=0.
\end{align*}
Thus,
\begin{align}
\label{eqn: closed_form1}[\hat{v}_{k,i}]_j=\frac{1}{C}[f_{\theta}(u_{k,i})]_j^{\frac{\alpha_0}{\alpha_1}} \exp^{\frac{-M_k\lambda_i}{\alpha_1-1}}.
\end{align}
Using condition \eqref{eqn: KKT2},  we have
\begin{align}
e^\top \hat{v}_{k,i}=\sum_{j=1}^C\frac{1}{C}[f_{\theta}(u_{k,i})]_j^{\frac{\alpha_0}{\alpha_1}} \exp^{\frac{-M_k\lambda_i}{\alpha_1-1}}=1,
\end{align}
which gives $\exp^{\frac{-M_k\lambda_i}{\alpha_1-1}}= \frac{1}{C}\sum_{j=1}^C[f_{\theta}(u_{k,i})]_j^{\frac{\alpha_0}{\alpha_1}}$. Substituting it into \eqref{eqn: closed_form1}, we obtain the final result.
\section{The Proof of Property \ref{pro_1}}\label{proof_property}
If $g_t=0$, according to \eqref{eqn: gap}, we have
\begin{align}\label{pro_eq1}
\nabla_{\theta}F(\theta^t,\hat{\vv}^{t})=0,~
\hat{\vv}_{k}^{t+1}-\hat{\vv}_{k}^{t}=0.
\end{align}
Since $v_{k}^{t+1}$ is the optimal solution of $F_k(\theta^t,\hat{\vv})$ in \eqref{update_y}. From the first order optimality condition, we get
\begin{align}\label{pro_eq2}
\langle\nabla_{\hat{\vv}_k}F_k(\theta^t,\hat{\vv}_k^{t+1}), \hat{\vv}_k-\hat{\vv}_k^{t+1}\rangle\geq 0,~ \textmd{for} ~ \forall ~\hat{\vv}_k\in \mathcal{V}_k.
\end{align}
Combing \eqref{pro_eq1} and \eqref{pro_eq2}, for $\forall ~\theta, \hat{\vv}_k\in \mathcal{V}_k$ we have
\begin{align*}
\langle\nabla_{\theta}F(\theta^t,\hat{\vv}^{t}), \theta-\theta^t \rangle+ \sum_{k=1}^K\langle\nabla_{\hat{\vv}_k}F_k(\theta^t,\hat{\vv}_k^{t}), \hat{\vv}_k-\hat{\vv}_k^{t}\rangle\geq 0.
\end{align*}
It shows that $\{\theta^t,\hat{\vv}^t\}$ is a stationary
point to \eqref{dist prob}.
\section{The Proof of Theorem \ref{theo: 1}}\label{proof_theorem}
\subsection{Some Auxiliary Lemmas}
Next, we show two important lemmas about the descent of the objective with respect to $\theta$ and $\hat \vv$, respectively.

 For the ease of presentation, let us define the following auxiliary variables
\begin{align*}
\Xi^t&:=\sum_{k=1}^K\frac{\omega_k}{\tau_k}\sum_{q=1}^{\tau_k}
\mathbb{E}\big[\|\theta^t-\theta_k^{t,q-1}\|^2\big],\\
\Pi^t&:=\sum_{k=1}^K\frac{\omega_k}{\tau_k}\sum_{q=1}^{\tau_k}
\mathbb{E}\big[\|\theta^t-\theta_k^{t-1,q-1}\|^2\big].
\end{align*}
\begin{lemma}\label{lem: Pi}
Under Assumptions \ref{ass: smooth}-\ref{ass: un_var}, we have
\begin{align}
\nonumber\Pi^{t+1}&\leq 4(1+\eta^2L^2\bar{\tau}^2)\Xi^t +4\eta^2\bar{\tau}^2\sigma^2\sum_{k=1}^K\frac{\omega_k^2}{\tau_k}\\
&~~~~+4\eta^2\bar{\tau}^2
\mathbb{E}\big[\|\nabla_{\theta} F(\theta^{t},\hat{\vv}^{t+1})\|^2\big].
\end{align}
\end{lemma}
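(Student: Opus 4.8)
The plan is to route the bound on $\Pi^{t+1}$ through the aggregated server step. First I would, for every client $k$ and inner iteration $q$, write
\[
\theta^{t+1}-\theta_k^{t,q-1}=\big(\theta^{t+1}-\theta^{t}\big)+\big(\theta^{t}-\theta_k^{t,q-1}\big),
\]
apply $\|a+b\|^2\le 2\|a\|^2+2\|b\|^2$, take expectations, and average with the weights $\tfrac{\omega_k}{\tau_k}$ over $q=1,\dots,\tau_k$ and $k=1,\dots,K$. Since $\sum_{k=1}^K\tfrac{\omega_k}{\tau_k}\sum_{q=1}^{\tau_k}1=\sum_{k=1}^K\omega_k=1$, the first term aggregates to $2\,\mathbb{E}\|\theta^{t+1}-\theta^{t}\|^2$ and the second is exactly $2\Xi^t$, so that $\Pi^{t+1}\le 2\,\mathbb{E}\|\theta^{t+1}-\theta^{t}\|^2+2\Xi^t$; it then remains only to bound $\mathbb{E}\|\theta^{t+1}-\theta^{t}\|^2$ by a combination of $\Xi^t$, $\sigma^2$ and $\mathbb{E}\|\nabla_\theta F(\theta^t,\hat{\vv}^{t+1})\|^2$.

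For that, I would first record the compact form of the server update already derived in Remark \ref{remark_2}: summing the local steps \eqref{update_theta} over $q$, substituting the expression \eqref{eqn: d_kt} for the correction $d_k^t$, and using $\sum_k\omega_k d_k^t=0$ from \eqref{eqn: d_k2}, one obtains
\[
\theta^{t+1}-\theta^{t}=-\eta\bar\tau\sum_{k=1}^K\frac{\omega_k}{\tau_k}\sum_{q=1}^{\tau_k}g_k\big(\theta_k^{t,q-1},\hat{\vv}_k^{t+1}\big).
\]
Next I would split each stochastic gradient as $g_k(\theta_k^{t,q-1},\hat{\vv}_k^{t+1})=\nabla_\theta F_k(\theta_k^{t,q-1},\hat{\vv}_k^{t+1})+n_k^{t,q}$, where by Assumption \ref{ass: un_var} (that is, \eqref{eqn: unba}--\eqref{eqn: var}) the residuals $n_k^{t,q}$ are conditionally zero-mean with $\mathbb{E}\|n_k^{t,q}\|^2\le\sigma^2$; since the mini-batches $\xi_k^{t,q},\zeta_k^{t,q}$ in Step~\eqref{update_theta} are drawn afresh, these residuals form a martingale-difference sequence across inner steps and are independent across clients, so the aggregated noise satisfies
\[
\mathbb{E}\Big\|\sum_{k=1}^K\frac{\omega_k}{\tau_k}\sum_{q=1}^{\tau_k}n_k^{t,q}\Big\|^2=\sum_{k=1}^K\frac{\omega_k^2}{\tau_k^2}\sum_{q=1}^{\tau_k}\mathbb{E}\|n_k^{t,q}\|^2\le\sigma^2\sum_{k=1}^K\frac{\omega_k^2}{\tau_k}.
\]
For the deterministic part I would insert $\pm\,\nabla_\theta F(\theta^t,\hat{\vv}^{t+1})=\pm\sum_k\omega_k\nabla_\theta F_k(\theta^t,\hat{\vv}_k^{t+1})$; because the pseudo-label argument $\hat{\vv}_k^{t+1}$ is common to both terms, Assumption \ref{ass: smooth} (i.e.\ \eqref{L_smooth}) gives $\|\nabla_\theta F_k(\theta_k^{t,q-1},\hat{\vv}_k^{t+1})-\nabla_\theta F_k(\theta^t,\hat{\vv}_k^{t+1})\|\le L\|\theta_k^{t,q-1}-\theta^t\|$, and convexity of $\|\cdot\|^2$ with the weights $\tfrac{\omega_k}{\tau_k}$ (total mass $1$) converts $\sum_{k,q}$ of the squared differences into $L^2\Xi^t$ in expectation. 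Combining the noise bound, the smoothness bound, and $\mathbb{E}\|\nabla_\theta F(\theta^t,\hat{\vv}^{t+1})\|^2$ through $\|a+b+c\|^2\le(\mathrm{const})(\|a\|^2+\|b\|^2+\|c\|^2)$, multiplying by $\eta^2\bar\tau^2$, and plugging into $\Pi^{t+1}\le 2\,\mathbb{E}\|\theta^{t+1}-\theta^{t}\|^2+2\Xi^t$, one arrives at a bound of the stated form; using the step-size restriction $\eta\le\tfrac{1}{2L\bar\tau}$ to fold the residual $\eta^2L^2\bar\tau^2\Xi^t$ into the $\Xi^t$-coefficient puts it in the displayed shape $4(1+\eta^2L^2\bar\tau^2)$.

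The step I expect to be the main obstacle is the stochastic bookkeeping around the noise: $\nabla_\theta F_k(\theta_k^{t,q-1},\cdot)$ depends on the noise injected at inner steps $1,\dots,q-1$, so the signal/noise cross terms do not vanish term-by-term and must be killed by a careful step-by-step conditioning on the natural filtration (or, more crudely, absorbed via $\|a+b\|^2\le2\|a\|^2+2\|b\|^2$ at the price of larger absolute constants, which the step-size bound still keeps inside the claimed inequality). Everything else — the aggregate server identity, Jensen's inequality over the mini-batch index, and $L$-smoothness — is routine.
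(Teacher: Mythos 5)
Your overall strategy is close to the paper's, but the two-stage decomposition you build it on cannot deliver the inequality with the stated constants, and the place where you wave at this ("larger absolute constants, which the step-size bound still keeps inside the claimed inequality") is exactly where it fails. Starting from $\Pi^{t+1}\le 2\,\mathbb{E}\|\theta^{t+1}-\theta^t\|^2+2\Xi^t$ and then splitting $\theta^{t+1}-\theta^t=-\eta\bar{\tau}\sum_k\frac{\omega_k}{\tau_k}\sum_q g_k$ into noise, smoothness drift, and $\nabla_\theta F(\theta^t,\hat{\vv}^{t+1})$, the gradient-norm and $\sigma^2$ terms pick up a factor of at least $2\times 3=6$ (or $2\times 2\times 2=8$ if you split pairwise), versus the $4$ claimed in the lemma. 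No step-size condition can repair this: $\eta^2\bar{\tau}^2$ multiplies those terms identically on both sides, so $6\eta^2\bar{\tau}^2\le 4\eta^2\bar{\tau}^2$ never holds. Nor can you rebalance with weighted Young's inequality: with the outer factor $2$ already spent, getting both the noise and the gradient coefficients down to $2$ inside would force the drift coefficient to infinity (the weights $\lambda_i$ must satisfy $\sum_i 1/\lambda_i\le 1$). Note also that Lemma \ref{lem: Pi} is stated with no restriction on $\eta$ at all, whereas your route needs $\eta\le\frac{1}{2L\bar{\tau}}$ even just to tame the $\Xi^t$ coefficient, so the proposal proves a genuinely weaker statement.

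The paper avoids this by never splitting off $\theta^{t+1}-\theta^t$ as a separate block: it writes $\theta^{t+1}-\theta_k^{t,q-1}$ directly as a sum of four terms, namely $\theta^t-\theta_k^{t,q-1}$, $-\eta\bar{\tau}\nabla_\theta F(\theta^t,\hat{\vv}^{t+1})$, the aggregated stochastic-gradient noise, and the aggregated smoothness drift (using the same compact identity $\theta^{t+1}=\theta^t-\eta\bar{\tau}\sum_k\frac{\omega_k}{\tau_k}\sum_q g_k(\theta_k^{t,q-1},\hat{\vv}_k^{t+1})$ that you derive from \eqref{update_theta}, \eqref{eqn: fedSavg theta1} and \eqref{eqn: d_k2}). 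A single four-way Cauchy--Schwarz then puts the uniform factor $4$ on each piece; averaging with the weights $\frac{\omega_k}{\tau_k}$ turns the first piece into $4\Xi^t$ and the drift piece into $4\eta^2L^2\bar{\tau}^2\Xi^t$, giving exactly $4(1+\eta^2L^2\bar{\tau}^2)\Xi^t$ plus the two $4\eta^2\bar{\tau}^2$ terms, with no condition on $\eta$. Your remaining ingredients (the zero-mean/bounded-variance aggregation giving $\sigma^2\sum_k\omega_k^2/\tau_k$, and $L$-smoothness via \eqref{L_smooth} with the common argument $\hat{\vv}_k^{t+1}$) are the same as the paper's; also, your worry about signal--noise cross terms is moot once the noise block is separated by Cauchy--Schwarz, since then only noise--noise cross terms must vanish, which they do by the martingale/independence structure. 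To fix your write-up, either switch to the direct four-term decomposition, or accept and propagate the larger constants, which would require re-deriving the constants in Lemma \ref{lem: decsent} and Theorem \ref{theo: 1}.
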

\begin{proof}
Substituting \eqref{update_theta} into \eqref{eqn: fedSavg theta1}, we show the relation between the iterates $\{\theta^t\}$ as follows
\begin{align}
\nonumber\theta^{t+1}&= \theta^{t} - \eta\bar{\tau} \sum_{k=1}^K\frac{\omega_k}{\tau_k}\sum_{q=1}^{\tau_k}\Big(
g_k(\theta_k^{t,q-1},\hat{\vv}_{k}^{t+1})+d_k^t\Big)\\
&=\theta^{t} - \eta\bar{\tau} \sum_{k=1}^K\frac{\omega_k}{\tau_k}\sum_{q=1}^{\tau_k}
g_k(\theta_k^{t,q-1},\hat{\vv}_{k}^{t+1}),
\label{eqn: theta_theta}
\end{align}
where the last equality dues to \eqref{eqn: d_k2}.
Using \eqref{eqn: theta_theta}, we obtain
\begin{align*}
&\theta^{t+1}-\theta_k^{t,q-1}\\
&=
\theta^{t} - \eta\bar{\tau} \sum_{l=1}^K\frac{\omega_l}{\tau_l}\sum_{q=1}^{\tau_l}
g_l(\theta_l^{t,q-1},\hat{\vv}_{l}^{t+1})-\theta_k^{t,q-1}\\
&=\theta^{t} -\theta_k^{t,q-1}-\eta\bar{\tau}\sum_{k=1}^K\omega_k\nabla_{\theta} F_k(\theta^{t},\hat{\vv}_{k}^{t+1})\\
&- \eta\bar{\tau} \sum_{k=1}^K\frac{\omega_k}{\tau_k}\sum_{q=1}^{\tau_k}
\Big(g_k(\theta_k^{t,q-1},\hat{\vv}_{k}^{t+1})-\nabla_{\theta} F_k(\theta_k^{t,q-1},\hat{\vv}_{k}^{t+1})\Big)\\
&-\eta\bar{\tau} \sum_{k=1}^K\frac{\omega_k}{\tau_k}\sum_{q=1}^{\tau_k}
\Big(\nabla_{\theta} F_k(\theta_k^{t,q-1},\hat{\vv}_{k}^{t+1})-\nabla_{\theta} F_k(\theta^{t},\hat{\vv}_{k}^{t+1})\Big)
\end{align*}
Then, using Cauchy-Schwarz inequality and Assumptions \ref{ass: smooth}-\ref{ass: un_var}, we get
\begin{align*}
&\mathbb{E}[\|\theta^{t+1}-\theta_k^{t,q-1}\|^2]\\
&\leq 4\mathbb{E}[\|\theta^{t} -\theta_k^{t,q-1}\|^2]
+4\eta^2\bar{\tau}^2
\mathbb{E}[\|\nabla_{\theta} F(\theta^{t},\hat{\vv}^{t+1})\|^2]\\
&+4\eta^2L^2\bar{\tau}^2\sum_{k=1}^K\frac{\omega_k}{\tau_k}\sum_{q=1}^{\tau_k}
\mathbb{E}[\|\theta^t-\theta_k^{t,q-1}\|^2] +4\eta^2\bar{\tau}^2\sigma^2\sum_{k=1}^K\frac{\omega_k^2}{\tau_k}.
\end{align*}
According to the definition of $\Pi^{t+1}$ and $\Xi^t$, we obtain the final result.
\end{proof}
\begin{lemma}\label{lem: xx1} Under Assumptions \ref{ass: smooth}-\ref{ass: un_var}, if
 $\eta\leq \frac{1}{L\sqrt{11a_{\tau}}}$, where $a_{\tau}=(\hat{\tau}-1)(2\hat{\tau}-1)$ and $\hat{\tau}=\max\{\tau_1,\ldots\tau_K\}$, we have
\begin{align}\label{eqn: theta_final}
\nonumber\Xi^t &\leq \frac{1}{8}\Pi^{t}+\frac{11}{8}\eta^2a_{\tau}\mathbb{E}\Big[\|
\nabla_{\theta}F(\theta^{t},\hat{\vv}^{t+1})
\|^2\Big]\\
&~~~~+\frac{11}{8}\eta^2L^2a_{\tau}\sum_{k=1}^K\omega_k\|\hat{\vv}_k^{t+1}-
\hat \vv_k^{t}\|^2+\frac{33}{8}\eta^2\sigma^2a_{\tau}.
\end{align}
\end{lemma}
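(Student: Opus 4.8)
The plan is to start from the equivalent ``SAGA-form'' of the local update derived in Remark~\ref{remark_2}. Substituting \eqref{eqn: d_kt} into \eqref{update_theta}, within round $t$ each local step reads $\theta_k^{t,q}=\theta_k^{t,q-1}-\eta\,h_k^{t,q}$ with
$$h_k^{t,q}=g_k(\theta_k^{t,q-1},\hat{\vv}_k^{t+1})-\frac{1}{\tau_k}\sum_{p=1}^{\tau_k}g_k(\theta_k^{t-1,p-1},\hat{\vv}_k^{t})+\sum_{l=1}^K\omega_l\frac{1}{\tau_l}\sum_{p=1}^{\tau_l}g_l(\theta_l^{t-1,p-1},\hat{\vv}_l^{t}),$$
and, since $\theta_k^{t,0}=\theta^t$, one has $\theta^t-\theta_k^{t,q-1}=\eta\sum_{j=1}^{q-1}h_k^{t,j}$. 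The first step is to decompose $h_k^{t,j}$ around the \emph{global} gradient. Adding and subtracting the full gradients $\nabla_\theta F_k(\theta_k^{t,j-1},\hat{\vv}_k^{t+1})$, $\nabla_\theta F_k(\theta^t,\hat{\vv}_k^{t+1})$, $\nabla_\theta F_k(\theta^t,\hat{\vv}_k^{t})$, $\nabla_\theta F_l(\theta_l^{t-1,p-1},\hat{\vv}_l^{t})$, $\nabla_\theta F_l(\theta^t,\hat{\vv}_l^{t})$ and $\nabla_\theta F_l(\theta^t,\hat{\vv}_l^{t+1})$, the $\pm\nabla_\theta F_k(\theta^t,\hat{\vv}_k^{t+1})$ contributions cancel and the main term collapses to exactly $\nabla_\theta F(\theta^t,\hat{\vv}^{t+1})=\sum_l\omega_l\nabla_\theta F_l(\theta^t,\hat{\vv}_l^{t+1})$; the remainder splits into (i) three zero-mean stochastic terms $\epsilon_k^{t,j}$ and $-(\bar\epsilon_k^{t-1}-\sum_l\omega_l\bar\epsilon_l^{t-1})$ (a centered $\omega$-difference of the round-$(t{-}1)$ averaged noises), each of squared expectation $\le\sigma^2$ by Assumption~\ref{ass: un_var} and Jensen; (ii) a round-$t$ drift term, $\le L\|\theta_k^{t,j-1}-\theta^t\|$ by Assumption~\ref{ass: smooth}; (iii) a centered $\omega$-difference of round-$(t{-}1)$ drifts $\nabla_\theta F_l(\theta_l^{t-1,p-1},\hat{\vv}_l^{t})-\nabla_\theta F_l(\theta^t,\hat{\vv}_l^{t})$, each $\le L\|\theta_l^{t-1,p-1}-\theta^t\|$; and (iv) a centered $\omega$-difference of pseudo-label shifts $\nabla_\theta F_l(\theta^t,\hat{\vv}_l^{t})-\nabla_\theta F_l(\theta^t,\hat{\vv}_l^{t+1})$, each $\le L\|\hat{\vv}_l^{t+1}-\hat{\vv}_l^{t}\|$.

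Next I would take $\|\cdot\|^2$, sum over $j=1,\dots,q-1$, then apply $\tfrac1{\tau_k}\sum_{q=1}^{\tau_k}(\cdot)$ and $\sum_k\omega_k(\cdot)$. The decisive bookkeeping observation is that the signal $\nabla_\theta F(\theta^t,\hat{\vv}^{t+1})$ together with the $\bar\epsilon$-, drift$_{t-1}$- and pseudo-label parts are \emph{constant in $j$}, hence contribute $(q-1)^2\|\cdot\|^2$; using $\tfrac1{\tau_k}\sum_{q=1}^{\tau_k}(q-1)^2\le\tfrac{(\hat\tau-1)(2\hat\tau-1)}{6}=\tfrac{a_\tau}{6}$ is where $a_\tau$ enters. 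The $j$-varying block (round-$t$ noise plus round-$t$ drift) is handled with $\|\sum_{j=1}^{q-1}e_j\|^2\le(q-1)\sum_{j=1}^{q-1}\|e_j\|^2$ and $\tfrac1{\tau_k}\sum_{q=1}^{\tau_k}(q-1)\sum_{j=1}^{q-1}(\cdot)\le\tfrac{\tau_k-1}{2}\sum_{j=1}^{\tau_k-1}(\cdot)$, and since $\epsilon_k^{t,j}$ is independent of $\theta_k^{t,j-1}$ the cross term vanishes, avoiding a spurious factor of $2$. After summing, $\sum_k\tfrac{\omega_k}{\tau_k}\sum_p\mathbb E\|\theta^t-\theta_k^{t-1,p-1}\|^2=\Pi^t$ emerges from (iii), $\sum_k\tfrac{\omega_k}{\tau_k}\sum_j\mathbb E\|\theta^t-\theta_k^{t,j-1}\|^2=\Xi^t$ reappears from (ii) multiplied by $O(\eta^2L^2a_\tau)$, and the rest is $\mathbb E\|\nabla_\theta F(\theta^t,\hat{\vv}^{t+1})\|^2$, $\sum_k\omega_k\|\hat{\vv}_k^{t+1}-\hat{\vv}_k^{t}\|^2$ (with an $L^2$) and $\sigma^2$, each with coefficient $O(\eta^2a_\tau)$. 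A further point worth exploiting is that each of the noise/drift$_{t-1}$/pseudo contributions enters as a centered difference $D_k-\sum_l\omega_lD_l$ for which $\sum_k\omega_k\|D_k-\sum_l\omega_lD_l\|^2=\sum_k\omega_k\|D_k\|^2-\|\sum_l\omega_lD_l\|^2\le\sum_k\omega_k\|D_k\|^2$, which keeps the constant in front of $\Pi^t$ small; the round-$(t{-}1)$ drifts then yield exactly $L^2\Pi^t$ via the Jensen bound $\|D_k\|^2\le\tfrac{L^2}{\tau_k}\sum_p\|\theta_k^{t-1,p-1}-\theta^t\|^2$.

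Finally I would move the $O(\eta^2L^2a_\tau)\,\Xi^t$ term to the left: the hypothesis $\eta\le\tfrac1{L\sqrt{11a_\tau}}$, i.e.\ $\eta^2L^2a_\tau\le\tfrac1{11}$, makes $1-O(\eta^2L^2a_\tau)$ bounded away from $0$, and dividing through and collecting the three remaining groups gives \eqref{eqn: theta_final}. The main obstacle is exactly this last constant-chasing: the Young/Cauchy--Schwarz weights (how much of $\|\sum_jh_k^{t,j}\|^2$ to allot to the $j$-constant block versus the $j$-varying block, and into how many pieces $h_k^{t,j}$ is split) must be tuned so that the final $\Pi^t$-coefficient is as small as $\tfrac18$ while the $\Xi^t$ self-coefficient is still $<1$ under this relatively mild step-size bound; this is what pins down the specific constants $\tfrac18,\tfrac{11}8,\tfrac{11}8,\tfrac{33}8$ and the factor $\sqrt{11}$ in the hypothesis. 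The rest is routine algebra and parallels the computation in the proof of Lemma~\ref{lem: Pi}.
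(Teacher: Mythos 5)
Your plan follows essentially the same route as the paper's proof: substitute the SAGA form of $d_k^t$ into the local recursion, decompose the accumulated update around the global gradient $\nabla_{\theta}F(\theta^t,\hat{\vv}^{t+1})$ into stochastic-noise, round-$t$ drift, round-$(t-1)$ drift (which produces $\Pi^t$), and pseudo-label-shift pieces, use Jensen, $L$-smoothness and the $\sum_{q}(q-1)^2$ bookkeeping to introduce $a_{\tau}$, and finally move the $O(\eta^2L^2a_{\tau})\Xi^t$ term to the left and invoke $\eta^2L^2a_{\tau}\le \tfrac{1}{11}$ to obtain the stated constants. The minor refinements you mention (martingale cancellation of the round-$t$ noise, the centered-difference variance identity for the noise terms) are not needed in the paper, which simply applies a six-term Cauchy--Schwarz and bounds its $\textmd{B}_1$ term separately, but they do not change the argument in any essential way.
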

\begin{proof}
Summing up all $\theta_k^{t,s}$ from $0$ to $q-1$ for \eqref{update_theta}, we obtain
$$
\theta_k^{t,q-1}=\theta^{t,0}-\eta\sum_{s=1}^{q-1}(
g_k(\theta_k^{t,s-1},\hat{\vv}_k^{t+1})
+d_k^t),
$$
and $\theta^{t,0}=\theta^t$, then it implies
\begin{align}\label{E_theta}
\mathbb{E}[\|\theta^t-\theta_k^{t,q-1}\|^2]
=\eta^2
\mathbb{E}\Big[\Big\|\sum_{s=1}^{q-1}\Big(g_k(\theta_k^{t,s-1},\hat{\vv}_k^{t+1})
+d_k^t\Big)\Big\|^2\Big]
\end{align}
Recalling  the definition of $d_k^t$ in \eqref{eqn: d_kt}, we know
\begin{align}
\nonumber
d_k^t&=
\sum_{l=1}^K\omega_l\sum_{q^\prime=1}^{\tau_l}\frac{1}{\tau_l}
g_l(\theta_l^{t-1,q^\prime-1}, \hat \vv_l^{t})-\sum_{q^\prime=1}^{\tau_k}\frac{1}{\tau_k}
g_k(\theta_k^{t-1,q^\prime-1}, \hat \vv_k^{t})
\end{align}
Thus
\begin{align}\label{ggg_k}
\nonumber&g_k(\theta_k^{t,s-1},\hat{\vv}_k^{t+1})
+d_k^t\\
\nonumber&=g_k(\theta_k^{t,s-1},\hat{\vv}_k^{t+1})-\nabla_{\theta} F_k(\theta_k^{t,s-1},\hat{\vv}_k^{t+1})\\
\nonumber&+\nabla_{\theta} F_k(\theta_k^{t,s-1},\hat{\vv}_k^{t+1})
-\nabla_{\theta}F_k(\theta^{t},\hat{\vv}_k^{t+1})\\
\nonumber&+ \sum_{l=1}^K\omega_l\sum_{q^\prime=1}^{\tau_l}\frac{1}{\tau_l}
\Big(g_l(\theta_l^{t-1,q^\prime-1}, \hat \vv_l^{t})-
\nabla_{\theta} F_l(\theta_l^{t-1,q^\prime-1}, \hat \vv_l^{t})\Big)\\
\nonumber&-\sum_{q^\prime=1}^{\tau_k}\frac{1}{\tau_k}
\Big(g_k(\theta_k^{t-1,q^\prime-1}, \hat \vv_k^{t})-
\nabla_{\theta} F_k(\theta_k^{t-1,q^\prime-1}, \hat \vv_k^{t})\Big)\\
&+\nabla_{\theta}F(\theta^{t},\hat{\vv}^{t+1})+\textmd{B}_1,
\end{align}
where
\begin{align*}
\textmd{B}_1&:=\sum_{l=1}^K\omega_l\sum_{q^\prime=1}^{\tau_l}\frac{1}{\tau_l}
\nabla_{\theta} F_l(\theta_l^{t-1,q^\prime-1}, \hat \vv_l^{t})
-\nabla_{\theta}F(\theta^{t},\hat{\vv}^{t+1})\\
&-
\sum_{q^\prime=1}^{\tau_k}\frac{1}{\tau_k}
\nabla_{\theta} F_k(\theta_k^{t-1,q^\prime-1}, \hat \vv_k^{t})+
\nabla_{\theta}F_k(\theta^{t},\hat{\vv}_k^{t+1}).
\end{align*}
Substituting \eqref{ggg_k} into \eqref{E_theta} and using Cauchy-Schwarz inequality give rise to
\begin{align}\label{Ebig}
\nonumber&\mathbb{E}\Big[\Big\|\sum_{s=1}^{q-1}\Big(g_k(\theta_k^{t,s-1},\hat{\vv}_k^{t+1})
+d_k^t\Big)\Big\|^2\Big]\\
\nonumber&\leq6(q-1)\sum_{s=1}^{q-1}\mathbb{E}\Big[\|g_k(\theta_k^{t,s-1},\hat{\vv}_k^{t+1})-\nabla_{\theta} F_k(\theta_k^{t,s-1},\hat{\vv}_k^{t+1})\|^2\\
\nonumber&+\|\nabla_{\theta} F_k(\theta_k^{t,s-1},\hat{\vv}_k^{t+1})
-\nabla_{\theta}F_k(\theta^{t},\hat{\vv}_k^{t+1})\|^2\\
\nonumber&+\Big\|\sum_{l=1}^K\omega_l\sum_{q^\prime=1}^{\tau_l}\frac{1}{\tau_l}
\Big(g_l(\theta_l^{t-1,q^\prime-1}, \hat \vv_l^{t})-
\nabla_{\theta} F_l(\theta_l^{t-1,q^\prime-1}, \hat \vv_l^{t})\Big)\Big\|^2\\
\nonumber&+\Big\|\sum_{q^\prime=1}^{\tau_k}\frac{1}{\tau_k}
\Big(g_k(\theta_k^{t-1,q^\prime-1}, \hat \vv_k^{t})-
\nabla_{\theta} F_k(\theta_k^{t-1,q^\prime-1}, \hat \vv_k^{t})\Big)\Big\|^2\\
&+\|\nabla_{\theta}F(\theta^{t},\hat{\vv}^{t+1})\|^2+\|\textmd{B}_1\|^2\Big].
\end{align}
Using the convexity of $\|\cdot\|^2$, we have
\begin{align*}
 &\Big\|\sum_{l=1}^K\omega_l\sum_{q^\prime=1}^{\tau_l}\frac{1}{\tau_l}
\Big(g_l(\theta_l^{t-1,q^\prime-1}, \hat \vv_l^{t})-
\nabla_{\theta} F_l(\theta_l^{t-1,q^\prime-1}, \hat \vv_l^{t})\Big)\Big\|^2\\
&\leq\sum_{l=1}^K\omega_l\sum_{q^\prime=1}^{\tau_l}\frac{1}{\tau_l} \Big\|
g_l(\theta_l^{t-1,q^\prime-1}, \hat \vv_l^{t})-
\nabla_{\theta} F_l(\theta_l^{t-1,q^\prime-1}, \hat \vv_l^{t})\Big\|^2.
\end{align*}
By a similar argument, one can obtain
\begin{align*}
&\Big\|\sum_{q^\prime=1}^{\tau_k}\frac{1}{\tau_k}
\Big(g_k(\theta_k^{t-1,q^\prime-1}, \hat \vv_k^{t})-
\nabla_{\theta} F_k(\theta_k^{t-1,q^\prime-1}, \hat \vv_k^{t})\Big)\Big\|^2\\
&\leq\sum_{q^\prime=1}^{\tau_k}\frac{1}{\tau_k}\Big\|
g_k(\theta_k^{t-1,q^\prime-1}, \hat \vv_k^{t})-
\nabla_{\theta} F_k(\theta_k^{t-1,q^\prime-1}, \hat \vv_k^{t})\Big\|^2.
\end{align*}
Substituting the above two inequalities into \eqref{Ebig} and using Assumptions \ref{ass: smooth}-\ref{ass: un_var},  we have the following inequality
\begin{align}
\label{eqn: xi_theta3}
\nonumber&\mathbb{E}[\|\theta^t-\theta_k^{t,q-1}\|^2]\\
\nonumber &\leq 18\eta^2(q-1)^2\sigma^2+
6\eta^2(q-1)L^2\sum_{s=1}^{q-1}\mathbb{E}\Big[\|
\theta_k^{t,s-1}-
\theta^{t}\|^2\Big]\\
&
+6\eta^2(q-1)^2\mathbb{E}\Big[\|
\nabla_{\theta}F(\theta^{t},\hat{\vv}^{t+1})
\|^2\Big] +6\eta^2(q-1)^2\mathbb{E}[\|\textmd{B}_1\|^2].
\end{align}
Combining the definition of $\Xi^t$ and \eqref{eqn: xi_theta3}, we have
\begin{align}\label{eqn: xi_theta3_sum}
\nonumber\Xi^t&\leq \eta^2\sum_{k=1}^K\frac{\omega_k}{\tau_k}\sum_{q=1}^{\tau_k}(q-1)^2
\Big[3\sigma^2+\mathbb{E}[\|\textmd{B}_1\|^2]\Big]\\
\nonumber&+
6\eta^2L^2\sum_{k=1}^K\frac{\omega_k}{\tau_k}\sum_{q=1}^{\tau_k}
(q-1)\sum_{s=1}^{\tau_k}\mathbb{E}\Big[\|
\theta_k^{t,s-1}-
\theta^{t}\|^2\Big]\\
&+6\eta^2\sum_{k=1}^K\frac{\omega_k}{\tau_k}\sum_{q=1}^{\tau_k}(q-1)^2
\mathbb{E}\Big[\|
\nabla_{\theta}F(\theta^{t},\hat{\vv}^{t+1})
\|^2\Big].
\end{align}
Note that
\begin{align*}
\sum_{q=1}^{\tau_k}(q-1)^2&=\frac{\tau_k}{6}(\tau_k-1)(2\tau_k-1), \\ \sum_{q=1}^{\tau_k}(q-1)&=\frac{\tau_k}{2}(\tau_k-1).
\end{align*}
Then
\begin{align}\label{eqn: theta1111}
\nonumber\Xi^t &\leq
\eta^2\sum_{k=1}^K\omega_k(\tau_k-1)(2\tau_k-1)\Big[3\sigma^2
+\mathbb{E}[\|\textmd{B}_1\|^2\Big]\\
\nonumber+&
3\eta^2L^2\sum_{k=1}^K\omega_k(\tau_k-1)\sum_{q=1}^{\tau_k}
\mathbb{E}\Big[\|
\theta_k^{t,q-1}-
\theta^{t}\|^2\Big]\\
 +&\eta^2\sum_{k=1}^K\omega_k(\tau_k-1)(2\tau_k-1)\mathbb{E}\Big[\|
\nabla_{\theta}F(\theta^{t},\hat{\vv}^{t+1})
\|^2\Big].
\end{align}
Let  $a_{\tau}=(\hat{\tau}-1)(2\hat{\tau}-1)$, where $\hat{\tau}=\max\{\tau_1,\ldots\tau_K\}$, we know $\tau_k-1\leq \frac{a_{\tau}}{\tau_k}$ by $\tau_k\geq 1$. Since $\sum_{k=1}^K\omega_k=1$,  we have
\begin{align}\label{eqn: theta1112}
\nonumber\Xi^t &\leq
\eta^2a_{\tau}\sum_{k=1}^K\omega_k
\mathbb{E}[\|\textmd{B}_1\|^2]+\eta^2a_{\tau}\mathbb{E}\Big[\|
\nabla_{\theta}F(\theta^{t},\hat{\vv}^{t+1})
\|^2\Big]
\\
&+
3\eta^2L^2a_{\tau}\sum_{k=1}^K\frac{\omega_k}{\tau_k}\sum_{q=1}^{\tau_k}
\mathbb{E}\Big[\|
\theta_k^{t,q-1}-
\theta^{t}\|^2\Big]+3\eta^2a_{\tau}\sigma^2.
\end{align}
According to the definition of $\textmd{B}_1$, we get
\begin{align} \label{eqn: A1}
\nonumber&\sum_{k=1}^K\omega_k\mathbb{E}[\|\textmd{B}_1\|^2] \\
\nonumber&=
\sum_{k=1}^K\omega_k
\mathbb{E}\Big[\|
\nabla_{\theta}F_k(\theta^{t},\hat{\vv}_k^{t+1})
-
\sum_{q^\prime=1}^{\tau_k}\frac{1}{\tau_k}
\nabla_{\theta} F_k(\theta_k^{t-1,q^\prime-1}, \hat \vv_k^{t})\|^2\Big]\\
\nonumber&+\mathbb{E}\Big[\|
\sum_{l=1}^K\omega_l\sum_{q^\prime=1}^{\tau_l}\frac{1}{\tau_k}
\nabla_{\theta} F_l(\theta_l^{t-1,q^\prime-1}, \hat \vv_l^{t})
-\nabla_{\theta}F(\theta^{t},\hat{\vv}^{t+1})\|^2\Big]\\
\nonumber&+2\mathbb{E}\Big\langle\sum_{l=1}^K\omega_l\sum_{q^\prime=1}^{\tau_l}\frac{1}{\tau_l}
\nabla_{\theta} F_l(\theta_l^{t-1,q^\prime-1}, \hat \vv_l^{t})
-\nabla_{\theta}F(\theta^{t},\hat{\vv}^{t+1}),\\
\nonumber&
\sum_{k=1}^K\omega_k\nabla_{\theta}F_k(\theta^{t},\hat{\vv}_k^{t+1})
-
\sum_{k=1}^K\omega_k\sum_{q^\prime=1}^{\tau_k}\frac{1}{\tau_k}
\nabla_{\theta} F_k(\theta_k^{t-1,q^\prime-1}, \hat \vv_k^{t})\Big\rangle.
\end{align}
Since $\nabla_{\theta}F(\theta^{t},\hat{\vv}^{t+1})=\sum_{k=1}^K\omega_k\nabla_{\theta}F_k(\theta^{t},\hat{\vv}_k^{t+1})
$, it implies
\begin{align}
\nonumber&\sum_{k=1}^K\omega_k\mathbb{E}[\|\textmd{B}_1\|^2] \\
\nonumber&=
\sum_{k=1}^K\omega_k
\mathbb{E}\Big[\|
\nabla_{\theta}F_k(\theta^{t},\hat{\vv}_k^{t+1})
-
\sum_{q^\prime=1}^{\tau_k}\frac{1}{\tau_k}
\nabla_{\theta} F_k(\theta_k^{t-1,q^\prime-1}, \hat \vv_k^{t})\|^2\Big]\\
\nonumber&-\mathbb{E}\Big[\|
\sum_{l=1}^K\omega_l\sum_{q^\prime=1}^{\tau_l}\frac{1}{\tau_k}
\nabla_{\theta} F_l(\theta_l^{t-1,q^\prime-1}, \hat \vv_l^{t})
-\nabla_{\theta}F(\theta^{t},\hat{\vv}^{t+1})\|^2\Big]
\end{align}
Using the convexity of $\|\cdot\|^2$ and a negative term in the right hand side of the above equality, we have
\begin{align}
\nonumber&\sum_{k=1}^K\omega_k\mathbb{E}[\|\textmd{B}_1\|^2] \\
\nonumber&\leq \sum_{k=1}^K\omega_k\sum_{q=1}^{\tau_k}\frac{1}{\tau_k}
\mathbb{E}\Big[\|
\nabla_{\theta}F_k(\theta^{t},\hat{\vv}_k^{t+1})
-
\nabla_{\theta} F_k(\theta_k^{t-1,q-1}, \hat \vv_k^{t})\|^2\Big]
\end{align}
Then, using \eqref{L_smooth} in Assumption \ref{ass: smooth}, we obtain
\begin{align}
\nonumber&\sum_{k=1}^K\omega_k\mathbb{E}[\|\textmd{B}_1\|^2]\\
&\leq L^2\sum_{k=1}^K\frac{\omega_k}{\tau_k}\sum_{q=1}^{\tau_k}
\mathbb{E}[\|\theta^{t}-\theta_k^{t-1,q-1}\|^2+\|\hat{\vv}_k^{t+1}-
\hat \vv_k^{t}\|^2].
\end{align}

Substituting \eqref{eqn: A1} into \eqref{eqn: theta1112},
Thus, we have
\begin{align}\label{eqn: theta11}
\nonumber\Xi^t &\leq 3\eta^2\sigma^2a_{\tau}+3\eta^2L^2a_{\tau}\Xi^t+\eta^2a_{\tau}\mathbb{E}\Big[\|
\nabla_{\theta}F(\theta^{t},\hat{\vv}^{t+1})
\|^2\Big]\\
&+\eta^2L^2a_{\tau}\Pi^{t}+\eta^2L^2a_{\tau}\sum_{k=1}^K\omega_k\|\hat{\vv}_k^{t+1}-
\hat \vv_k^{t}\|^2.
\end{align}
Since $\eta\leq \frac{1}{L\sqrt{11a_{\tau}}}$, it derives $1-3\eta^2L^2a_{\tau}\geq \frac{8}{11}$ and $\frac{\eta^2L^2a_{\tau}}{1-3\eta^2L^2a_{\tau}}\leq \frac{1}{8}$.
After rearranging, it gives the final result.
\end{proof}
\begin{lemma}\label{lem: decsent}
	Under Assumptions \ref{ass: reg}-\ref{ass: un_var}, if  $\eta\leq \{\frac{4\mu\alpha_1}{\bar{\tau}+a_{\tau}L^2},\frac{1}{L\sqrt{11 a_{\tau}}},\frac{1}{2L\bar{\tau}}\}$, we have
\begin{align}\label{eqn: decsent}
\nonumber&\mathbb{E}[F(\theta^{t+1},\hat{\vv}^{t+1})]+\frac{L}{8}\Pi^{t+1}\\
\nonumber&\leq \mathbb{E}[F(\theta^{t},\hat{\vv}^t)]+\frac{L}{8}\Pi^{t}
+\left(\frac{33}{8}a_{\tau}
+\frac{3}{2}\bar{\tau}^2\sum_{k=1}^K\frac{\omega_k^2}{\tau_k}\right)\eta^2\sigma^2L
\\
&-\frac{\eta\bar{\tau}}{8}\mathbb{E}[\|\nabla_{\theta}F(\theta^t,\hat{\vv}^{t+1})\|^2]
-\frac{\eta\bar{\tau}}{8}\sum_{k=1}^K\omega_k\|
\hat{\vv}_{k}^{t}-\hat{\vv}_{k}^{t+1}\|^2
\end{align}
\end{lemma}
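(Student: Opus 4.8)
The plan is to prove a one-round decrease of the Lyapunov quantity $\Phi^t:=\mathbb{E}[F(\theta^t,\hat{\vv}^t)]+\tfrac{L}{8}\Pi^t$ by estimating the effects of the $\hat{\vv}$-update and of the $\theta$-update separately, and then combining them with the bounds on $\Pi^{t+1}$ and $\Xi^t$ from Lemmas \ref{lem: Pi} and \ref{lem: xx1}. For the $\hat{\vv}$-update: when $\theta=\theta^t$ is held fixed, only the $\hat{\vv}_k$-dependent part of $F_k(\theta^t,\cdot)$ varies, and that part coincides up to an additive constant with the objective of the subproblem \eqref{update_y}, which by Assumption \ref{ass: reg} is $\mu\alpha_1$-strongly convex on the convex set $\mathcal{V}_k$ (the cross-entropy term is linear in $\hat{\vv}_k$ and $r_1$ is $\mu$-strongly convex). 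Since $\hat{\vv}_k^{t+1}$ is the exact minimizer of this subproblem (formula \eqref{eqn: closed1}), the standard estimate $h(x^\star)\le h(x)-\tfrac{\mu}{2}\|x-x^\star\|^2$ for the minimizer $x^\star$ of a $\mu$-strongly convex $h$ over a convex set gives $F_k(\theta^t,\hat{\vv}_k^{t+1})\le F_k(\theta^t,\hat{\vv}_k^t)-\tfrac{\mu\alpha_1}{2}\|\hat{\vv}_k^{t+1}-\hat{\vv}_k^t\|^2$; averaging over $k$ with the weights $\omega_k$ yields
\begin{align}\label{eqn: planv}
F(\theta^t,\hat{\vv}^{t+1})\le F(\theta^t,\hat{\vv}^t)-\tfrac{\mu\alpha_1}{2}\sum_{k=1}^K\omega_k\|\hat{\vv}_k^{t+1}-\hat{\vv}_k^t\|^2.
\end{align}

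For the $\theta$-update: apply $L$-smoothness in $\theta$ (Assumption \ref{ass: smooth}) at $(\theta^t,\hat{\vv}^{t+1})$, then substitute the aggregated recursion \eqref{eqn: theta_theta}, i.e.\ $\theta^{t+1}-\theta^t=-\eta\bar{\tau}\,\bar g^t$ with $\bar g^t:=\sum_{k=1}^K\tfrac{\omega_k}{\tau_k}\sum_{q=1}^{\tau_k}g_k(\theta_k^{t,q-1},\hat{\vv}_k^{t+1})$ (the correction terms cancelling by \eqref{eqn: d_k2}), and take conditional expectation over the round-$t$ sampling. The cross term $-\eta\bar{\tau}\langle\nabla_\theta F(\theta^t,\hat{\vv}^{t+1}),\mathbb{E}[\bar g^t]\rangle$ is split by Young's inequality, using $\nabla_\theta F(\theta^t,\hat{\vv}^{t+1})=\sum_{k=1}^K\omega_k\nabla_\theta F_k(\theta^t,\hat{\vv}_k^{t+1})$ together with Jensen's inequality and Assumption \ref{ass: smooth} to bound $\|\mathbb{E}[\bar g^t]-\nabla_\theta F(\theta^t,\hat{\vv}^{t+1})\|^2\le L^2\Xi^t$. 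The quadratic term $\tfrac{L}{2}\eta^2\bar{\tau}^2\mathbb{E}\|\bar g^t\|^2$ is decomposed via $\mathbb{E}\|\bar g^t\|^2=\|\mathbb{E}[\bar g^t]\|^2+\mathbb{E}\|\bar g^t-\mathbb{E}[\bar g^t]\|^2$, with $\|\mathbb{E}[\bar g^t]\|^2\le 2\|\nabla_\theta F(\theta^t,\hat{\vv}^{t+1})\|^2+2L^2\Xi^t$ and the variance bounded by $\sigma^2\sum_{k=1}^K\omega_k^2/\tau_k$; here one uses that the stochastic-gradient noises are independent across clients and form a martingale-difference sequence along the $\tau_k$ inner iterations, so only the diagonal terms survive. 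Combining these estimates and using $\eta\le\tfrac{1}{2L\bar{\tau}}$ to absorb $L\eta^2\bar{\tau}^2$ into $\eta\bar{\tau}$ produces an inequality of the form
\begin{align}
\nonumber\mathbb{E}[F(\theta^{t+1},\hat{\vv}^{t+1})]&\le\mathbb{E}[F(\theta^t,\hat{\vv}^{t+1})]-c\,\eta\bar{\tau}\,\mathbb{E}\|\nabla_\theta F(\theta^t,\hat{\vv}^{t+1})\|^2\\
&\quad+c'\,\eta\bar{\tau}L^2\Xi^t+\tfrac{L}{2}\eta^2\bar{\tau}^2\sigma^2\sum_{k=1}^K\tfrac{\omega_k^2}{\tau_k}\label{eqn: plantheta}
\end{align}
for explicit positive constants $c,c'$.

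To finish, chain \eqref{eqn: planv} into \eqref{eqn: plantheta}, add $\tfrac{L}{8}\Pi^{t+1}$ to both sides, and bound $\Pi^{t+1}$ by Lemma \ref{lem: Pi}, which adds further multiples of $\Xi^t$, of $\mathbb{E}\|\nabla_\theta F(\theta^t,\hat{\vv}^{t+1})\|^2$, and of the $\sigma^2$-term. Then invoke Lemma \ref{lem: xx1} to replace every remaining $\Xi^t$: this contributes $\tfrac18\Pi^t$ (which feeds the $\tfrac{L}{8}\Pi^t$ term of $\Phi^t$ on the left-hand side), an $O(\eta^2a_\tau)$ multiple of $\mathbb{E}\|\nabla_\theta F(\theta^t,\hat{\vv}^{t+1})\|^2$, an $O(\eta^2L^2a_\tau)$ multiple of $\sum_{k=1}^K\omega_k\|\hat{\vv}_k^{t+1}-\hat{\vv}_k^t\|^2$, and the noise term $\tfrac{33}{8}\eta^2\sigma^2a_\tau$. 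It remains to collect coefficients: the conditions $\eta\le\tfrac{1}{L\sqrt{11a_\tau}}$ and $\eta\le\tfrac{1}{2L\bar{\tau}}$ force the net coefficient of $\mathbb{E}\|\nabla_\theta F(\theta^t,\hat{\vv}^{t+1})\|^2$ down to at most $-\tfrac{\eta\bar{\tau}}{8}$; the condition $\eta\le\tfrac{4\mu\alpha_1}{\bar{\tau}+a_\tau L^2}$ forces the net coefficient of $\sum_{k=1}^K\omega_k\|\hat{\vv}_k^{t+1}-\hat{\vv}_k^t\|^2$, namely $-\tfrac{\mu\alpha_1}{2}$ shrunk by the $O(\eta^2L^2a_\tau)$ residuals, down to at most $-\tfrac{\eta\bar{\tau}}{8}$; the coefficient of $\Pi^t$ stays at most $\tfrac{L}{8}$; and the surviving $\sigma^2$-terms sum precisely to $\big(\tfrac{33}{8}a_\tau+\tfrac{3}{2}\bar{\tau}^2\sum_{k=1}^K\omega_k^2/\tau_k\big)\eta^2\sigma^2L$. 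This is exactly \eqref{eqn: decsent}.

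The step I expect to be the main obstacle is this last coefficient accounting, and specifically keeping the current-round client-drift term $\Xi^t$ under control: once $\Xi^t$ is expanded through Lemma \ref{lem: xx1} it feeds back into both $\mathbb{E}\|\nabla_\theta F(\theta^t,\hat{\vv}^{t+1})\|^2$ and $\sum_{k=1}^K\omega_k\|\hat{\vv}_k^{t+1}-\hat{\vv}_k^t\|^2$, and these feedback contributions must be strictly dominated by the genuinely negative $-c\,\eta\bar{\tau}$ and $-\tfrac{\mu\alpha_1}{2}$ terms --- which is exactly why all three stated upper bounds on $\eta$ are needed and why the Lyapunov weight is taken to be $\tfrac{L}{8}$ (so that the $\Pi^t$-coefficient returns to at most $\tfrac{L}{8}$ after the expansion). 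A secondary point requiring care is the variance decomposition of $\bar g^t$: one must check that the cross terms vanish both across clients (independent sampling) and across inner iterations (tower property over the local filtration), so that the noise scales like $\sum_{k=1}^K\omega_k^2/\tau_k$ rather than $\sum_{k=1}^K\omega_k^2\tau_k$.
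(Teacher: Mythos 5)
Your overall architecture is the same as the paper's: the same Lyapunov quantity $\mathbb{E}[F]+\tfrac{L}{8}\Pi^t$, the same use of Lemmas \ref{lem: Pi} and \ref{lem: xx1} to recycle the client-drift term $\Xi^t$ into $\tfrac18\Pi^t$, the same strong-convexity/first-order-optimality argument giving $F(\theta^t,\hat{\vv}^{t+1})\le F(\theta^t,\hat{\vv}^t)-\tfrac{\mu\alpha_1}{2}\sum_{k}\omega_k\|\hat{\vv}_k^{t+1}-\hat{\vv}_k^t\|^2$ (the paper's \eqref{eqn: vv2}--\eqref{eqn: vv3}), and the same role assigned to the three step-size thresholds; your noise accounting (independence across clients plus the martingale structure over the inner iterations, yielding $\sigma^2\sum_k\omega_k^2/\tau_k$) also matches \eqref{theta_m_theta}.

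The genuine soft spot is exactly where you predicted trouble, and your specific decomposition does not close at the stated thresholds. Write $\bar G^t:=\sum_{k=1}^K\tfrac{\omega_k}{\tau_k}\sum_{q=1}^{\tau_k}\nabla_{\theta}F_k(\theta_k^{t,q-1},\hat{\vv}_k^{t+1})$. You propose to bound $\|\bar G^t\|^2\le 2\|\nabla_{\theta}F(\theta^t,\hat{\vv}^{t+1})\|^2+2L^2\Xi^t$ inside the quadratic term and to handle the cross term by Young's inequality with some parameter $\beta$. Tracking the resulting coefficient of $\Xi^t$, it is $\tfrac{\eta\bar{\tau}L^2}{2\beta}$ (Young) $+\,L^3\eta^2\bar{\tau}^2$ (quadratic) $+\,\tfrac{L}{2}(1+\eta^2L^2\bar{\tau}^2)$ (from adding $\tfrac{L}{8}\Pi^{t+1}$ via Lemma \ref{lem: Pi}); for the bookkeeping to return a $\Pi^t$-coefficient of at most $\tfrac{L}{8}$ after invoking Lemma \ref{lem: xx1}, this total must not exceed $L$, and at the admissible boundary $\eta=\tfrac{1}{2L\bar{\tau}}$ that forces $\beta\ge 2$ --- but then the cross term retains no negative multiple of $\|\nabla_{\theta}F\|^2$ at all (and the quadratic term and Lemma \ref{lem: Pi} add further positive multiples of order $\tfrac{\eta\bar{\tau}}{2}$ and $\tfrac{\eta\bar{\tau}}{4}$), so the target $-\tfrac{\eta\bar{\tau}}{8}$ cannot be produced. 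The paper avoids this by never expanding $\|\bar G^t\|^2$: it uses the exact identity $-\langle a,b\rangle=\tfrac12\big(\|a-b\|^2-\|a\|^2-\|b\|^2\big)$ in \eqref{eqn: xi_theta}, which keeps the full $-\tfrac{\eta\bar{\tau}}{2}\|\nabla_{\theta}F(\theta^t,\hat{\vv}^{t+1})\|^2$ and a negative $-\tfrac{\eta\bar{\tau}}{2}\|\bar G^t\|^2$ that absorbs the $+\eta^2L\bar{\tau}^2\|\bar G^t\|^2$ coming from the quadratic term once $\eta\le\tfrac{1}{2L\bar{\tau}}$; only with this intact $-\tfrac{\eta\bar{\tau}}{2}$ margin do the feedbacks from Lemmas \ref{lem: Pi} and \ref{lem: xx1} fit under the step-size conditions. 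So keep your structure, but replace the step ``bound $\|\mathbb{E}[\bar g^t]\|^2$ by $2\|\nabla F\|^2+2L^2\Xi^t$ plus Young'' with the paper's cancellation; otherwise either the constants in \eqref{eqn: decsent} or the step-size thresholds would have to be weakened.
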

\begin{proof}
From Assumption \ref{ass: smooth}, we take an expectation over samples to obtain
\begin{align}\label{eqn: theta_e}
\nonumber \mathbb{E}[F(\theta^{t+1},\hat{\vv}^{t+1})]&\leq\mathbb{E}[F(\theta^{t},\hat{\vv}^{t+1})]
+\frac{L}{2}\underbrace{\mathbb{E}[\|\theta^{t+1}-\theta^{t}\|^2]}_{\textmd{B}_2}
\\
&+\underbrace{\mathbb{E}[\langle \nabla_{\theta}F(\theta^t,\hat{\vv}^{t+1}),\theta^{t+1}-\theta^{t}\rangle]}_{\textmd{B}_3}
.
\end{align}
Next, we bound the term $\textmd{B}_2$ and the term $\textmd{B}_3$ in the right hand side of the inequality \eqref{eqn: theta_e}. According to \eqref{eqn: theta_theta} and \eqref{eqn: unba} in Assumption \ref{ass: un_var}, we have
\begin{align}\label{eqn: xi_theta}
\nonumber \textmd{B}_3&=-\eta\bar{\tau}\mathbb{E}[\langle \nabla_{\theta}F(\theta^t,\hat{\vv}^{t+1}),
 \sum_{k=1}^K\frac{\omega_k}{\tau_k}\sum_{q=1}^{\tau_k}
\nabla_{\theta} F_k(\theta_k^{t,q-1}, \hat \vv_k^{t+1} )\rangle]\\
\nonumber &=-\frac{\eta\bar{\tau} }{2}\mathbb{E}\Big[\|\nabla_{\theta}F(\theta^t,\hat{\vv}^{t+1})\|^2\Big]
+\frac{\eta\bar{\tau}}{2}\mathbb{E}\Big[\|\nabla_{\theta}F(\theta^t,\hat{\vv}^{t+1})\\
\nonumber&
- \sum_{k=1}^K\frac{\omega_k}{\tau_k}\sum_{q=1}^{\tau_k}
\nabla_{\theta} F_k(\theta_k^{t,q-1}, \hat \vv_k^{t+1} )\|^2\Big]\\
&-\frac{\eta\bar{\tau}}{2}\mathbb{E}\Big[\|\sum_{k=1}^K\frac{\omega_k}{\tau_k}\sum_{q=1}^{\tau_k}
\nabla_{\theta} F_k(\theta_k^{t,q-1}, \hat \vv_k^{t+1} )\|^2\Big]
\end{align}
where  the last equality based on the common equality
$-\langle a, b\rangle=\frac{1}{2}[\|a-b\|^2-\|a\|^2-\|b\|^2]$. Since
\begin{align*}
\nabla_{\theta}F(\theta^t,\hat{\vv}^{t+1})=
\sum_{k=1}^K\frac{\omega_k}{\tau_k}\sum_{q=1}^{\tau_k}
\nabla_{\theta} F_k(\theta^t, \hat \vv_k^{t+1} ),
\end{align*}
 we have
\begin{align}\label{eqn: xi_theta2}
\nonumber&\mathbb{E}\Big[\|\nabla_{\theta}F(\theta^t,\hat{\vv}^{t+1})
-\sum_{k=1}^K\frac{\omega_k}{\tau_k}\sum_{q=1}^{\tau_k}
\nabla_{\theta} F_k(\theta_k^{t,q-1}, \hat \vv_k^{t+1} )\|^2\Big]\\
\leq&L^2\sum_{k=1}^K\sum_{q=1}^{\tau_k}
\frac{\omega_k}{\tau_k}\mathbb{E}[\|\theta^t-\theta_k^{t,q-1}\|^2]=L^2\Xi^t,
\end{align}
where the  inequality dues to the convexity of $\|\cdot\|^2$ and  $L$-smooth  assumption.

Substituting \eqref{eqn: xi_theta2} into \eqref{eqn: xi_theta}, we get
\begin{align}\label{eqn: xi_theta4}
\nonumber \textmd{B}_3\leq&-\frac{\eta\bar{\tau} }{2}\mathbb{E}\Big[\|\nabla_{\theta}F(\theta^t,\hat{\vv}^{t+1})\|^2\Big]
+\frac{\eta\bar{\tau}}{2}L^2\Xi^t\\
&-\frac{\eta\bar{\tau}}{2}\mathbb{E}\Big[\|\sum_{k=1}^K\frac{\omega_k}{\tau_k}\sum_{q=1}^{\tau_k}
\nabla_{\theta} F_k(\theta_k^{t,q-1}, \hat \vv_k^{t+1} )\|^2\Big].
\end{align}

Let us bound the third term $\textmd{B}_2$ in the right hand side of inequality \eqref{eqn: theta_e}.
Using \eqref{eqn: theta_theta}, we have
\begin{align}\label{theta_m_theta11}
\nonumber \textmd{B}_2&=
\eta^2\bar{\tau}^2\mathbb{E}\Big[\| \sum_{k=1}^K\frac{\omega_k}{\tau_k}\sum_{q=1}^{\tau_k}
g_k(\theta_k^{t,q-1},\hat{\vv}_{k}^{t+1})\|^2\Big]\\
\nonumber&\leq 2\eta^2\bar{\tau}^2\mathbb{E}\Big[\|\sum_{k=1}^K\frac{\omega_k}{\tau_k}\sum_{q=1}^{\tau_k}
(g_k(\theta_k^{t,q-1},\hat{\vv}_{k}^{t+1})\\
\nonumber&\qquad\qquad-
\nabla_{\theta}F_k(\theta_k^{t,q-1},\hat{\vv}_{k}^{t+1}))\|^2\Big]\\
&+
2\eta^2\bar{\tau}^2\mathbb{E}\Big[\|\sum_{k=1}^K\frac{\omega_k}{\tau_k}\sum_{q=1}^{\tau_k}
\nabla_{\theta}F_k(\theta_k^{t,q-1},\hat{\vv}_{k}^{t+1})\|^2\Big]
\end{align}
where inequality dues to Cauchy-Schwarz inequality. Using \eqref{eqn: unba}-\eqref{eqn: var} in Assumption \eqref{ass: un_var}, we get
\begin{align}\label{theta_m_theta}
\nonumber\textmd{B}_2\leq & 2\eta^2\bar{\tau}^2\sigma^2\sum_{k=1}^K\frac{\omega_k^2}{\tau_k}\\
&+2\eta^2\bar{\tau}^2\mathbb{E}\Big[\|\sum_{k=1}^K\frac{\omega_k}{\tau_k}\sum_{q=1}^{\tau_k}
\nabla_{\theta}F_k(\theta_k^{t,q-1},\hat{\vv}_{k}^{t+1})\|^2\Big].
\end{align}
Using \eqref{eqn: theta_e}, \eqref{eqn: xi_theta4} and \eqref{theta_m_theta}, we derive
\begin{align}
\nonumber&\mathbb{E}[F(\theta^{t+1},\hat{\vv}^{t+1})]\\
\nonumber&\leq \mathbb{E}[F(\theta^{t},\hat{\vv}^{t+1})]
-\frac{\eta\bar{\tau} }{2}\mathbb{E}\Big[\|\nabla_{\theta}F(\theta^t,\hat{\vv}^{t+1})\|^2\Big]
\\
\nonumber&-(\frac{\eta\bar{\tau}}{2}-\eta^2L\bar{\tau}^2)\mathbb{E}\Big[\|\sum_{k=1}^K\frac{\omega_k}{\tau_k}\sum_{q=1}^{\tau_k}
\nabla_{\theta} F_k(\theta_k^{t,q-1}, \hat \vv_k^{t+1} )\|^2\Big]\\
\nonumber&+\frac{\eta\bar{\tau}}{2}L^2\Xi^t+\eta^2L\bar{\tau}^2\sigma^2\sum_{k=1}^K\frac{\omega_k^2}{\tau_k}
\end{align}
According to the condition $\eta\leq \frac{1}{2L\bar{\tau}}$ in Lemma
\ref{lem: decsent}, we obtain
\begin{align}\label{eqn: theta_v}
\nonumber\mathbb{E}[F(\theta^{t+1},\hat{\vv}^{t+1})]& \leq \mathbb{E}[F(\theta^{t},\hat{\vv}^{t+1})]
+\eta^2L\bar{\tau}^2\sigma^2\sum_{k=1}^K\frac{\omega_k^2}{\tau_k}\\
&+\frac{\eta\bar{\tau}}{2}L^2\Xi^t
-\frac{\eta\bar{\tau} }{2}\mathbb{E}\Big[\|\nabla_{\theta}F(\theta^t,\hat{\vv}^{t+1})\|^2\Big]
\end{align}
 Using Lemma \ref{lem: Pi}, we have
\begin{align}\label{eqn: theta_v1}
\nonumber&\mathbb{E}[F(\theta^{t+1},\hat{\vv}^{t+1})]+\frac{L}{8}\Pi^{t+1}\\
\nonumber&\leq \mathbb{E}[F(\theta^{t},\hat{\vv}^{t+1})]
-\Big(\frac{\eta\bar{\tau} }{2}-\frac{L\eta^2}{2}\bar{\tau}^2\Big)\mathbb{E}\Big[
\|\nabla_{\theta}F(\theta^t,\hat{\vv}^{t+1})\|^2\Big]
\\
&+\Big(\frac{L}{2}+\frac{\eta^2L^3}{2}\bar{\tau}^2
+\frac{\eta\bar{\tau}}{2}L^2\Big)\Xi^t
+\frac{3}{2}\eta^2L\bar{\tau}^2\sigma^2\sum_{k=1}^K\frac{\omega_k^2}{\tau_k}
\end{align}
Based on $\eta\leq \frac{1}{2L\bar{\tau}}$, it follows that
\begin{align*}
&\frac{\eta\bar{\tau} }{2}-\frac{L\eta^2}{2}\bar{\tau}^2\geq
\frac{\eta\bar{\tau} }{4}\\
&\frac{L}{2}+\frac{\eta^2L^3}{2}\bar{\tau}^2
+\frac{\eta\bar{\tau}}{2}L^2\leq L
\end{align*}
 Then \eqref{eqn: theta_v1} is relaxed as
\begin{align}\label{eqn: theta_v111}
\nonumber&\mathbb{E}[F(\theta^{t+1},\hat{\vv}^{t+1})]+\frac{L}{8}\Pi^{t+1}\\
\nonumber&\leq \mathbb{E}[F(\theta^{t},\hat{\vv}^{t+1})]+L\Xi^t
-\frac{\eta\bar{\tau} }{4}\mathbb{E}\Big[\|\nabla_{\theta}F(\theta^t,\hat{\vv}^{t+1})\|^2\Big]\\
&+\frac{3}{2}\eta^2L\bar{\tau}^2\sigma^2\sum_{k=1}^K\frac{\omega_k^2}{\tau_k}.
\end{align}
Substituting \eqref{eqn: theta_final} in Lemma \ref{lem: xx1} into \eqref{eqn: theta_v111}, we get
\begin{align}\label{eqn: theta_v112}
\nonumber&\mathbb{E}[F(\theta^{t+1},\hat{\vv}^{t+1})]+\frac{L}{8}\Pi^{t+1}\\
\nonumber&\leq \mathbb{E}[F(\theta^{t},\hat{\vv}^{t+1})]+\frac{L}{8}\Pi^{t}+\frac{11\eta^2L^3a_{\tau}}{8}\sum_{k=1}^K\omega_k\|\hat{\vv}_k^{t+1}-
\hat \vv_k^{t}\|^2\\
\nonumber&+\Big(\frac{11}{8}\eta^2L\bar{\tau}a_{\tau}-\frac{\eta\bar{\tau} }{4}\Big)\mathbb{E}\Big[\|
\nabla_{\theta}F(\theta^{t},\hat{\vv}^{t+1})
\|^2\Big]\\
&+\Big(\frac{33}{8}a_{\tau}
+\frac{3}{2}\bar{\tau}^2\sum_{k=1}^K\frac{\omega_k^2}{\tau_k}\Big)\eta^2\sigma^2L.
\end{align}
Since $\eta\leq \frac{1}{11La_{\tau}}$, it holds
\begin{align*}
\frac{11}{8}\eta^3L^2\bar{\tau}a_{\tau}-\frac{\eta\bar{\tau} }{4}&\leq -\frac{\eta\bar{\tau} }{8}.
 \end{align*}
  So \eqref{eqn: theta_v112} is relaxed as
\begin{align}\label{eqn: theta_v113}
\nonumber&\mathbb{E}[F(\theta^{t+1},\hat{\vv}^{t+1})]+\frac{L}{8}\Pi^{t+1}\\
\nonumber&\leq \mathbb{E}[F(\theta^{t},\hat{\vv}^{t+1})]+\frac{L}{8}\Pi^{t}
-\frac{\eta\bar{\tau} }{8}\mathbb{E}\Big[\|
\nabla_{\theta}F(\theta^{t},\hat{\vv}^{t+1})
\|^2\Big]\\
&
+\frac{\eta L^2}{8}\sum_{k=1}^K\omega_k\|\hat{\vv}_k^{t+1}-
\hat \vv_k^{t}\|^2+\Big(\frac{33}{8}a_{\tau}
+\frac{3}{2}\bar{\tau}^2\sum_{k=1}^K\frac{\omega_k^2}{\tau_k}\Big)\eta^2\sigma^2L.
\end{align}
According to \eqref{eqn: FW_joint_obj}, we have
\begin{align}\label{eqn: vv1}
\nonumber &F(\theta^{t},\hat{\vv}^{t+1})-F(\theta^{t},\hat{\vv}^t)\\
\nonumber&
=\alpha_0\sum_{k=1}^K\omega_k[L_{\rm CE}(\theta^{t}; u_k, \hat{\vv}_k^{t+1})-
L_{\rm CE}(\theta^{t}; u_k, \hat{\vv}_k^{t})]\\
&+\alpha_1\sum_{k=1}^K\omega_kr_1(\hat{\vv}_k^{t+1})-
\alpha_1\sum_{k=1}^K\omega_kr_1(\hat{\vv}_k^{t}).
\end{align}
In \eqref{update_y}, it shows that $v_{k}^{t+1}$ is the optimal solution. Then we using the first order condition, it implies
\begin{align}\label{eqn: vv2}
\langle \alpha_0\nabla_{\vv_k} L_{\rm CE}(\theta^{t}; u_{k}, \hat{\vv}_{k}^{t+1})
+\alpha_1 \nabla_{\vv_k} r_1(\hat{\vv}_k^{t+1}),\hat{\vv}_{k}^{t}
-\hat{\vv}_{k}^{t+1}\rangle\geq 0.
\end{align}
Since $L_{\rm CE}(\theta; u_k, \hat{\vv}_k)$ is a linear function with respect to $\hat{\vv}_k$ and $r_1(\hat{\vv}_k)$ is a strongly convex function about $\hat{\vv}_k$. Using Assumption \ref{ass: reg}, we have
\begin{align}\label{eqn: ss}
\nonumber &\alpha_0L_{\rm CE}(\theta^{t}; u_{k}, \hat{\vv}_{k}^{t})
+\alpha_1  r_1(\hat{\vv}_k^{t})\\
\nonumber&-\alpha_0L_{\rm CE}(\theta^{t}; u_{k}, \hat{\vv}_{k}^{t+1})
-\alpha_1  r_1(\hat{\vv}_k^{t+1})\\
\nonumber&\geq \langle \alpha_0\nabla_{\vv_k} L_{\rm CE}(\theta^{t}; u_{k}, \hat{\vv}_{k}^{t+1})
+\alpha_1 \nabla_{\vv_k} r_1(\hat{\vv}_k^{t+1}),\hat{\vv}_{k}^{t}-\hat{\vv}_{k}^{t+1}\rangle\\ &+\frac{\mu\alpha_1}{2}\|
\hat{\vv}_{k}^{t}-\hat{\vv}_{k}^{t+1}\|^2
\end{align}
Substituting \eqref{eqn: vv2} into \eqref{eqn: ss} gives rise to
\begin{align}\label{eqn: ss1}
\nonumber&\alpha_0L_{\rm CE}(\theta^{t}; u_{k}, \hat{\vv}_{k}^{t+1})
+\alpha_1  r_1(\hat{\vv}_k^{t+1})-\alpha_0L_{\rm CE}(\theta^{t}; u_{k}, \hat{\vv}_{k}^{t})
\\
&-\alpha_1  r_1(\hat{\vv}_k^{t})\leq
-\frac{\mu\alpha_1}{2}\|
\hat{\vv}_{k}^{t}-\hat{\vv}_{k}^{t+1}\|^2.
\end{align}
Combing \eqref{eqn: vv1} and \eqref{eqn: ss1}, we have
\begin{align}
\label{eqn: vv3} F(\theta^{t+1},\hat{\vv}^{t+1})-F(\theta^{t+1},\hat{\vv}^t)&\leq
-\frac{\mu\alpha_1}{2}\sum_{k=1}^K\omega_k\|
\hat{\vv}_{k}^{t}-\hat{\vv}_{k}^{t+1}\|^2.
\end{align}
Using \eqref{eqn: theta_v113}, \eqref{eqn: vv1} and \eqref{eqn: vv3}, we have
\begin{align}\label{eqn: theta_vvv}
\nonumber&\mathbb{E}[F(\theta^{t+1},\hat{\vv}^{t+1})]+\frac{L}{8}\Pi^{t+1}\\
\nonumber&\leq \mathbb{E}[F(\theta^{t},\hat{\vv}^t)]+\frac{L}{8}\Pi^{t}
-\frac{\eta\bar{\tau}}{8}\mathbb{E}[\|\nabla_{\theta}F(\theta^t,\hat{\vv}^{t+1})\|^2]\\
\nonumber&+\Big(\frac{33}{8}a_{\tau}
+\frac{3}{2}\bar{\tau}^2\sum_{k=1}^K\frac{\omega_k^2}{\tau_k}\Big)\eta^2\sigma^2L \\
&-\Big(\frac{\mu\alpha_1}{2}-\frac{\eta L^2a_{\tau}}{8}\Big)\sum_{k=1}^K\omega_k\|
\hat{\vv}_{k}^{t}-\hat{\vv}_{k}^{t+1}\|^2
\end{align}
 When $\eta\leq \frac{4\mu\alpha_1}{\bar{\tau}+a_{\tau}L^2}$, it follows $\frac{\mu\alpha_1}{2}-\frac{\eta L^2a_{\tau}}{8}\geq \frac{\eta\bar{\tau}}{8}$.
 Thus, we obtain the
final result.
\end{proof}
\subsection{The Proof of Theorem \ref{theo: 1}}
Based on \eqref{eqn: decsent} of Lemma \ref{lem: decsent}, we have
\begin{align}\label{eqn: gt1}
\nonumber g_t
&\leq \frac{8}{\eta \bar{\tau}}\Big(\mathbb{E}[F(\theta^{t},\hat{\vv}^t)]-\mathbb{E}[F(\theta^{t+1},\hat{\vv}^{t+1})]\Big)
+
\frac{L}
{\eta\bar{\tau}}(\Pi^{t}-\Pi^{t+1})\\
&+\frac{8}{ \bar{\tau}}\Big(\frac{33}{8}a_{\tau}
+\frac{3}{2}\bar{\tau}^2\sum_{k=1}^K\frac{\omega_k^2}{\tau_k}\Big)\eta\sigma^2L
\end{align}
Summing the above inequality above from $1$ to $T$, we know that
\begin{align}\label{eqn: gradient_dec}
\nonumber&\min_{t\in\{1,\ldots,T\}}g_t\leq\frac{1}{T}\sum_{t=1}^{T}g_t\\
\nonumber&\leq
\frac{8}{\eta T \bar{\tau}}(\mathbb{E}[F(\theta^{1},\hat{\vv}^1)]-\mathbb{E}[F(\theta^{T},\hat{\vv}^{T})])
+
\frac{L}
{\eta T\bar{\tau}}(\Pi^{1}-\Pi^{T})\\
&+\frac{8}{ \bar{\tau}}\Big(\frac{33}{8}a_{\tau}
+\frac{3}{2}\bar{\tau}^2\sum_{k=1}^K\frac{\omega_k^2}{\tau_k}\Big)\eta\sigma^2L
\end{align}
Since 0 is the lower bound for cross-entropy loss $F(\theta,\hat{\vv})$ and $\Pi^{T}$, in addition, according to the definition of $\Pi^t$, we have $\Pi^{1}=0$. Thus, substituting $\eta=\sqrt{\frac{K}{T\bar{\tau}}}$ into the right side of inequality \eqref{eqn: gradient_dec}, we obtain the final result.
\bibliographystyle{ieeetr}
\bibliography{wang_bibtex}

\begin{thebibliography}{10}

\bibitem{McMahan2017}
H.~B. McMahan, E.~Moore, D.~Ramage, S.~Hampson, and B.~A. y~Arcas,
  ``Communication-efficient learning of deep networks from decentralized
  data,'' in {\em Proceedings of the 20 th International Conference on
  Artificial Intelligence and Statistics}, pp.~1273--1282, 2017.

\bibitem{li2019federated}
T.~Li, A.~K. Sahu, A.~Talwalkar, and V.~Smith, ``Federated learning:
  Challenges, methods, and future directions,'' {\em IEEE Signal Processing
  Magazine}, vol.~37, no.~3, pp.~50--60, 2020.

\bibitem{yang2019federated}
Q.~Yang, Y.~Liu, T.~Chen, and Y.~Tong, ``Federated machine learning: Concept
  and applications,'' {\em ACM Transactions on Intelligent Systems and
  Technology}, vol.~10, no.~2, pp.~1--19, 2019.

\bibitem{chang2020distributed}
T.-H. Chang, M.~Hong, H.-T. Wai, X.~Zhang, and S.~Lu, ``Distributed learning in
  the nonconvex world: From batch data to streaming and beyond,'' {\em IEEE
  Signal Processing Magazine}, vol.~37, no.~3, pp.~26--38, 2020.

\bibitem{konevcny2016federated}
J.~Kone{\v{c}}n{\`y}, H.~B. McMahan, D.~Ramage, and P.~Richt{\'a}rik,
  ``Federated optimization: Distributed machine learning for on-device
  intelligence,'' {\em arXiv preprint arXiv:1610.02527}, 2016.

\bibitem{stich2018local}
S.~U. Stich, ``Local {SGD} converges fast and communicates little,'' {\em arXiv
  preprint arXiv:1805.09767}, 2018.

\bibitem{li2019convergence}
X.~Li, K.~Huang, W.~Yang, S.~Wang, and Z.~Zhang, ``On the convergence of fedavg
  on non-iid data,'' {\em International Conference on Learning
  Representations}, 2020.

\bibitem{khaled2019first}
A.~Khaled, K.~Mishchenko, and P.~Richt{\'a}rik, ``First analysis of local {GD}
  on heterogeneous data,'' {\em arXiv preprint arXiv:1909.04715}, 2019.

\bibitem{karimireddy2020scaffold}
S.~P. Karimireddy, S.~Kale, M.~Mohri, S.~Reddi, S.~Stich, and A.~T. Suresh,
  ``Scaffold: Stochastic controlled averaging for federated learning,'' in {\em
  International Conference on Machine Learning}, pp.~5132--5143, 2020.

\bibitem{li2018federated}
T.~Li, A.~K. Sahu, M.~Zaheer, M.~Sanjabi, A.~Talwalkar, and V.~Smith,
  ``Federated optimization in heterogeneous networks,'' 2020.

\bibitem{liang2019variance}
X.~Liang, S.~Shen, J.~Liu, Z.~Pan, E.~Chen, and Y.~Cheng, ``Variance reduced
  local sgd with lower communication complexity,'' {\em arXiv preprint
  arXiv:1912.12844}, 2019.

\bibitem{wang2020tackling}
J.~Wang, Q.~Liu, H.~Liang, G.~Joshi, and H.~V. Poor, ``Tackling the objective
  inconsistency problem in heterogeneous federated optimization,'' 2020.

\bibitem{cohen2004}
I.~Cohen, F.~G. Cozman, N.~Sebe, M.~C. Cirelo, and T.~S. Huang,
  ``Semisupervised learning of classifiers: Theory, algorithms, and their
  application to human-computer interaction,'' {\em IEEE Transactions on
  Pattern Analysis and Machine Intelligence}, vol.~26, no.~12, pp.~1553--1566,
  2004.

\bibitem{zhu2005}
X.~J. Zhu, ``Semi-supervised learning literature survey,'' tech. rep.,
  University of Wisconsin-Madison Department of Computer Sciences, 2005.

\bibitem{chapelle2009}
O.~Chapelle, B.~Scholkopf, and A.~Zien, ``Semi-supervised learning,'' {\em IEEE
  Transactions on Neural Networks}, vol.~20, no.~3, pp.~542--542, 2009.

\bibitem{yarowsky1995unsupervised}
D.~Yarowsky, ``Unsupervised word sense disambiguation rivaling supervised
  methods,'' in {\em Proceedings of the 33rd Annual Meeting of the Association
  for Computational Linguistics}, pp.~189--196, 1995.

\bibitem{abney2004understanding}
S.~Abney, ``Understanding the yarowsky algorithm,'' {\em Computational
  Linguistics}, vol.~30, no.~3, pp.~365--395, 2004.

\bibitem{haffari2007analysis}
G.~R. Haffari and A.~Sarkar, ``Analysis of semi-supervised learning with the
  yarowsky algorithm,'' {\em Proceedings of the Twenty-Third Conference on
  Uncertainty in Artificial Intelligence}, 2007.

\bibitem{lee2013pseudo}
D.-H. Lee, ``Pseudo-label: {The} simple and efficient semi-supervised learning
  method for deep neural networks,'' in {\em Workshop on Challenges in
  Representation Learning, ICML}, vol.~3, p.~2, 2013.

\bibitem{rasmus2015}
A.~Rasmus, M.~Berglund, M.~Honkala, H.~Valpola, and T.~Raiko, ``Semi-supervised
  learning with ladder networks,'' in {\em Advances in Neural Information
  Processing Systems}, pp.~3546--3554, 2015.

\bibitem{laine2016}
S.~Laine and T.~Aila, ``Temporal ensembling for semi-supervised learning,''
  {\em arXiv preprint arXiv:1610.02242}, 2016.

\bibitem{tarvainen2017}
A.~Tarvainen and H.~Valpola, ``Mean teachers are better role models:
  Weight-averaged consistency targets improve semi-supervised deep learning
  results,'' in {\em Advances in Neural Information Processing Systems},
  pp.~1195--1204, 2017.

\bibitem{miyato2018virtual}
T.~Miyato, S.-i. Maeda, M.~Koyama, and S.~Ishii, ``Virtual adversarial
  training: a regularization method for supervised and semi-supervised
  learning,'' {\em IEEE Transactions on Pattern Analysis and Machine
  Intelligence}, vol.~41, no.~8, pp.~1979--1993, 2018.

\bibitem{berthelot2019mixmatch}
D.~Berthelot, N.~Carlini, I.~Goodfellow, N.~Papernot, A.~Oliver, and C.~Raffel,
  ``Mixmatch: A holistic approach to semi-supervised learning,'' {\em arXiv
  preprint arXiv:1905.02249}, 2019.

\bibitem{chen2020semi}
Y.~Chen, X.~Zhu, W.~Li, and S.~Gong, ``Semi-supervised learning under class
  distribution mismatch,'' in {\em Proceedings of the AAAI Conference on
  Artificial Intelligence}, vol.~34, pp.~3569--3576, 2020.

\bibitem{oliver2018realistic}
A.~Oliver, A.~Odena, C.~A. Raffel, E.~D. Cubuk, and I.~Goodfellow, ``Realistic
  evaluation of deep semi-supervised learning algorithms,'' in {\em Advances in
  Neural Information Processing Systems}, pp.~3235--3246, 2018.

\bibitem{guo2020safe}
L.-Z. Guo, Z.-Y. Zhang, Y.~Jiang, Y.-F. Li, and Z.-H. Zhou, ``Safe deep
  semi-supervised learning for unseen-class unlabeled data,'' in {\em
  International Conference on Machine Learning}, pp.~3897--3906, PMLR, 2020.

\bibitem{chang2017distributed}
X.~Chang, S.-B. Lin, and D.-X. Zhou, ``Distributed semi-supervised learning
  with kernel ridge regression,'' {\em The Journal of Machine Learning
  Research}, vol.~18, no.~1, pp.~1493--1514, 2017.

\bibitem{avrachenkov2016distributed}
K.~Avrachenkov, V.~S. Borkar, and K.~Saboo, ``Distributed and asynchronous
  methods for semi-supervised learning,'' in {\em International Workshop on
  Algorithms and Models for the Web-Graph}, pp.~34--46, Springer, 2016.

\bibitem{scardapane2016distributed}
S.~Scardapane, R.~Fierimonte, P.~Di~Lorenzo, M.~Panella, and A.~Uncini,
  ``Distributed semi-supervised support vector machines,'' {\em Neural
  Networks}, vol.~80, pp.~43--52, 2016.

\bibitem{liu2018distributed}
Y.~Liu, Z.~Xu, and C.~Li, ``Distributed online semi-supervised support vector
  machine,'' {\em Information Sciences}, vol.~466, pp.~236--257, 2018.

\bibitem{fierimonte2016fully}
R.~Fierimonte, S.~Scardapane, A.~Uncini, and M.~Panella, ``Fully decentralized
  semi-supervised learning via privacy-preserving matrix completion,'' {\em
  IEEE Transactions on Neural Networks and Learning Systems}, vol.~28, no.~11,
  pp.~2699--2711, 2016.

\bibitem{xie2019distributed}
J.~Xie, S.~Liu, and H.~Dai, ``A distributed semi-supervised learning algorithm
  based on manifold regularization using wavelet neural network,'' {\em Neural
  Networks}, vol.~118, pp.~300--309, 2019.

\bibitem{chang2014multi}
T.-H. Chang, M.~Hong, and X.~Wang, ``Multi-agent distributed optimization via
  inexact consensus {ADMM},'' {\em IEEE Transactions on Signal Processing},
  vol.~63, no.~2, pp.~482--497, 2014.

\bibitem{zeng2018nonconvex}
J.~Zeng and W.~Yin, ``On nonconvex decentralized gradient descent,'' {\em IEEE
  Transactions on Signal Processing}, vol.~66, no.~11, pp.~2834--2848, 2018.

\bibitem{jin2020survey}
Y.~Jin, X.~Wei, Y.~Liu, and Q.~Yang, ``A survey towards federated
  semi-supervised learning,'' {\em arXiv preprint arXiv:2002.11545}, 2020.

\bibitem{albaseer2020exploiting}
A.~Albaseer, B.~S. Ciftler, M.~Abdallah, and A.~Al-Fuqaha, ``Exploiting
  unlabeled data in smart cities using federated learning,'' {\em arXiv
  preprint arXiv:2001.04030}, 2020.

\bibitem{Jeong2021federated}
W.~Jeong, J.~Yoon, E.~S. Yang, and J.~Hwang, ``Federated semi-supervised
  learning with inter-client consistency \& disjoint learning,'' in {\em
  International Conference on Learning Representations}, 2021.

\bibitem{yang2021federated}
D.~Yang, Z.~Xu, W.~Li, A.~Myronenko, H.~R. Roth, S.~Harmon, S.~Xu, B.~Turkbey,
  E.~Turkbey, X.~Wang, {\em et~al.}, ``Federated semi-supervised learning for
  covid region segmentation in chest ct using multi-national data from china,
  italy, japan,'' {\em Medical image analysis}, vol.~70, p.~101992, 2021.

\bibitem{zhang2020benchmarking}
Z.~Zhang, Z.~Yao, Y.~Yang, Y.~Yan, J.~E. Gonzalez, and M.~W. Mahoney,
  ``Benchmarking semi-supervised federated learning,'' {\em arXiv preprint
  arXiv:2008.11364}, vol.~17, 2020.

\bibitem{Tanaka12018}
T.~Y. Daiki~Tanaka, Daiki~Ikami and K.~Aizawa, ``Joint optimization framework
  for learning with noisy labels,'' {\em Proceedings of Computer Vision and
  Pattern Recognition}, 2018.

\bibitem{pereyra2017regularizing}
G.~Pereyra, G.~Tucker, J.~Chorowski, {\L}.~Kaiser, and G.~Hinton,
  ``Regularizing neural networks by penalizing confident output
  distributions,'' {\em arXiv preprint arXiv:1701.06548}, 2017.

\bibitem{wang2021clustering}
S.~Wang, T.-H. Chang, Y.~Cui, and J.-S. Pang, ``Clustering by orthogonal nmf
  model and non-convex penalty optimization,'' {\em IEEE Transactions on Signal
  Processing}, vol.~69, pp.~5273--5288, 2021.

\bibitem{defazio2014saga}
A.~Defazio, F.~Bach, and S.~Lacoste-Julien, ``{SAGA}: {A} fast incremental
  gradient method with support for non-strongly convex composite objectives,''
  in {\em Advances in Neural Information Processing Systems}, pp.~1646--1654,
  2014.

\bibitem{shalev2007logarithmic}
S.~Shalev-Shwartz and Y.~Singer, ``Logarithmic regret algorithms for strongly
  convex repeated games,'' {\em The Hebrew University}, 2007.

\bibitem{xu2015block}
Y.~Xu and W.~Yin, ``Block stochastic gradient iteration for convex and
  nonconvex optimization,'' {\em SIAM Journal on Optimization}, vol.~25, no.~3,
  pp.~1686--1716, 2015.

\bibitem{wang2021demystifying}
S.~Wang, R.~C. Suwandi, and T.-H. Chang, ``Demystifying model averaging for
  communication-efficient federated matrix factorization,'' in {\em IEEE
  International Conference on Acoustics, Speech and Signal Processing
  (ICASSP)}, pp.~3680--3684, 2021.

\bibitem{li2018visualizing}
H.~Li, Z.~Xu, G.~Taylor, C.~Studer, and T.~Goldstein, ``Visualizing the loss
  landscape of neural nets,'' in {\em Proceedings of the 32nd International
  Conference on Neural Information Processing Systems}, pp.~6391--6401, 2018.

\end{thebibliography}

%
%
%
%
%
%
%
\end{document}